\newtheoremstyle{colon}%
{2mm}
{}
{\itshape}%bodyfont
{}%indent
{\bfseries}%headfont
{:}%head punctuation
{ }%space after head
{}
\theoremstyle{colon}
\newtheorem{theorem}{Theorem}
\newtheorem{definition}{Definition}
\newtheorem{lemma}{Lemma}
\newtheorem{problem}{Problem}
\newtheorem{assumption}{Assumption}
\newtheorem{algo}{Algorithm}
\newtheorem{corollary}{Corollary}
\theoremstyle{remark}
\newtheorem{remark}{Remark}
\newcommand{\expnumber}[2]{{#1}\mathrm{e}{#2}}
\newcommand{\mbb}[1]{{\mathbb{#1}}}
\newcommand{\mcl}[1]{{\mathcal{#1}}}
\newcommand{\trsp}{{\mathsf{T}}}
\newcommand{\sothree}{{\mathrm{SO}(3)}}
\DeclareMathOperator{\bi}{\textup{\textbf{i}}}
\DeclareMathOperator{\bj}{\textup{\textbf{j}}}
\DeclareMathOperator{\bz}{\textup{\textbf{k}}}
\newcommand{\irchi}[2]{\raisebox{\depth}{$#1\chi$}}
\DeclareRobustCommand{\rchi}{{\mathpalette\irchi\relax}}
\newcommand{\AiA}[2]{\mcl{A}_{#1}\bm{i}\mcl{A}_{#2}^*}
\newcommand{\AstA}[2]{\mcl{A}_{#1}\star{A}_{#2}}
\newcommand{\Asq}[1]{\mcl{A}_{#1}^{2\star}}
\newcommand{\inred}[1]{{\leavevmode\color{red}#1}}
\newcommand{\inorange}[1]{{\leavevmode\color{orange}#1}}
\newcommand\Label[1]{&\refstepcounter{equation}(\theequation)\ltx@label{#1}&}
\let\@oldmaketitle\@maketitle% Store \@maketitle
\renewcommand{\@maketitle}{\@oldmaketitle% Update \@maketitle to insert...
	\centering
	\includegraphics[width=\textwidth]{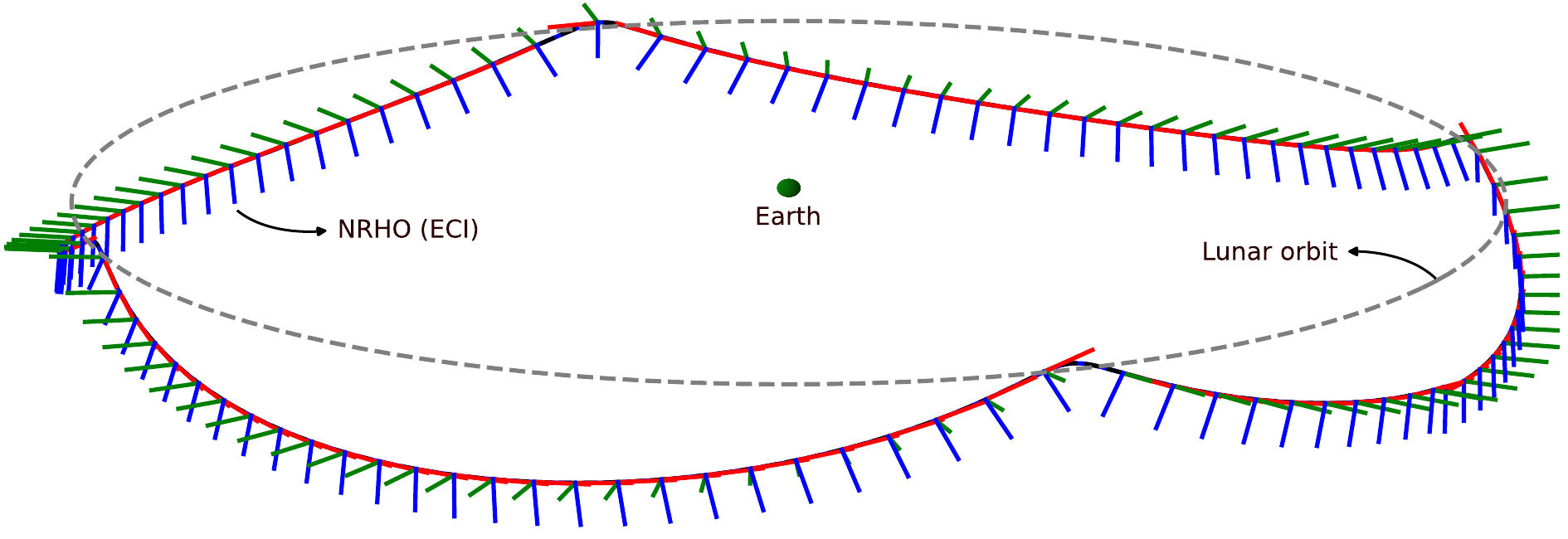}
	% \vspace{-2mm}
	\captionof{figure}{ An illustrative example of the PHODCOS coordinate system applied to a highly nonlinear curve, specifically the Near Rectilinear Halo Orbit (NRHO) selected for the Lunar Gateway. The frame components are colored red, blue, and green, respectively. The Earth is represented by the green sphere in the middle, whereas the Moon's orbit is given by the dashed gray line. This depiction is presented in the Earth-Centered Inertial (ECI) frame.} %An exemplary application of the presented coordinate system PHODCOS to a highly nonlinear curve, the Near Rectilinear Halo Orbit (NRHO) chosen for the Lunar Gateway. 
	\label{fig:nrho_cover}
	\vspace{1mm}
	}% ... an image
\begin{document}
\title{PHODCOS:\\Pythagorean Hodograph-based \\Differentiable Coordinate System}

\author{%
Jon Arrizabalaga\\ 
Technical University of Munich\\
Carnegie Mellon University\\
Pittsburgh, PA 15213\\
jon.arrizabalaga@tum.de
\and
Fausto Vega\\
Robotics Institute\\
Carnegie Mellon University\\
Pittsburgh, PA 15213\\
fvega@andrew.cmu.edu
\and 
 Zbyněk Šír\\
Faculty of Mathematics and Physic\\
Charles University\\
Prague, 186 75\\
zbynek.sir@mff.cuni.cz
\and
Zachary Manchester\\
Robotics Institute\\
Carnegie Mellon University\\
Pittsburgh, PA 15213\\
zacm@cmu.edu
\and
Markus Ryll\\
Dept. of Aerospace and Geodesy\\
Technical University of Munich\\
Munich, 80333\\
markus.ryll@tum.de
%%%% IMPORTANT: Use the correct copyright information--IEEE, Crown, or U.S. government. %%%%%
\thanks{\footnotesize 979-8-3503-5597-0/25/$\$31.00$ \copyright2025 IEEE}              % This creates the copyright info that is the correct 2025 data.
%\thanks{{U.S. Government work not protected by U.S. copyright}}         % Use this copyright notice only if you are employed by the U.S. Government.
%\thanks{{979-8-3503-5597-0/25/$\$31.00$ \copyright2025 Crown}}          % Use this copyright notice only if you are employed by a crown government (e.g., Canada, UK, Australia).
%\thanks{{979-8-3503-5597-0/25/$\$31.00$ \copyright2025 European Union}}    % Use this copyright notice is you are employed by the European Union.
}

\maketitle

\thispagestyle{plain}
\pagestyle{plain}

\maketitle

\thispagestyle{plain}
\pagestyle{plain}

\begin{abstract}
This paper presents PHODCOS, an algorithm that assigns a moving coordinate system to a given curve. The parametric functions underlying the coordinate system, i.e., the path function, the moving frame and its angular velocity, are exact -- approximation free -- differentiable, and sufficiently continuous. This allows for computing a coordinate system for highly nonlinear curves, while remaining compliant with autonomous navigation algorithms that require first and second order gradient information. In addition, the coordinate system obtained by PHODCOS is fully defined by a finite number of coefficients, which may then be used to compute additional geometric properties of the curve, such as arc-length, curvature, torsion, etc.  Therefore, PHODCOS presents an appealing paradigm to enhance the geometrical awareness of existing guidance and navigation on-orbit spacecraft maneuvers. The PHODCOS algorithm is presented alongside an analysis of its error and approximation order, and thus, it is guaranteed that the obtained coordinate system matches the given curve within a desired tolerance. To demonstrate the applicability of the coordinate system resulting from PHODCOS, we present numerical examples in the Near Rectilinear Halo Orbit (NRHO) for the Lunar Gateway.
\end{abstract} 
\vspace{-5mm}
\begin{flushleft}
\textbf{Code}: {\small \url{https://github.com/jonarriza96/phodcos}}
% \textbf{Video}: \url{https://youtu.be/TQig2j90Ijc}
\end{flushleft}

\tableofcontents

\newpage
%%%%%%%%%%%%%%%%%%%%%%%%%%%%%%%%%%%%%%
\section{Introduction}
%%%%%%%%%%%%%%%%%%%%%%%%%%%%%%%%%%%%%%
Understanding and characterizing the intricate shapes and motions present in nature is a fundamental goal in both science and engineering. Although the familiar Cartesian representation excels at depicting simple lines and circles, it frequently falls short in capturing the complexities of curves and surfaces seen in the real world. Consequently, developing more adaptable tools that can represent the spatiotemporal aspects of physical phenomena has been crucial in the study of dynamic systems and their control algorithms.

Within this scope, modeling the behavior of a dynamical system relies on three design choices: 1) the \emph{states} that parameterize the system, 2) an \emph{evolution variable} upon which the system progresses, e.g. time or space, and 3) a \emph{reference} state that quantifies the motions of the system in a relative fashion. In the specific case where the states describe the motion of a body's position and/or orientation, this reference becomes a \emph{reference frame}, which can be stationary or moving. The optimal choice of these three ingredients is highly dependent on the particular system and its intended application.

Restricting ourselves to body motions, of all three ingredients, the first and second are specific to the system. Nevertheless, the third ingredient --the selection of a reference frame-- is universal, as it remains independent of the governing equations of motion. For this reason, we propose a method to assign a \emph{differentiable moving frame} to any given curve. The suggested algorithm shows attractive attributes to be used as a reference frame for dynamical systems, as well as for characterizing complex shapes and their geometrical properties.

In particular, this paper presents \textbf{PHODCOS}: a \textbf{P}ythagorean \textbf{HO}dograph-based \textbf{D}ifferentiable \textbf{CO}ordinate \textbf{S}ystem. The moving frame resulting from this algorithm is characterized by the following four properties: First, it is built upon \emph{Pythagorean-Hodograph (PH) curves}, a class of polynomials that efficiently parameterize geometric features like arc-length, curvature, torsion, moving frame components and its angular velocity. Second, the algorithm operates using closed-form analytical expressions, avoiding numerical or iterative processes, making it \emph{computationally efficient}. Third, the conversion \emph{error is bounded}, ensuring that the coordinate system aligns with the curve within a defined tolerance. Fourth, the moving frame and angular velocity are \emph{twice differentiable}, making the coordinate system compliant with algorithms based on learning or optimization. To achieve this, our method relies on four main contributions:
\begin{enumerate}[leftmargin=2em]
    \item We extend the $C^2$ hermite interpolation method presented in~\cite{vsir2007} to $C^4$ and introduce a closure mechanism, thereby, ensuring that the coordinate system is twice differentiable.
    \item We conduct an approximation order analysis and provide a theoretical proof that guarantees the coordinate system to be within a desired tolerance of the curve.
    \item We showcase the method's applicability in a highly nonlinear exemplary curve: the Near Rectilinear Halo Orbit (NRHO) for the Lunar Gateway.
    \item We \href{https://github.com/jonarriza96/phodcos}{open source} the implementation of the algorithm, alongside the numerical analysis and examples shown in this paper.
\end{enumerate}

The remainder of this manuscript is structured as follows: Section~\ref{sec:related_work} provides an overview of existing methods for assigning a moving frame to a given curve. Section~\ref{sec:problem_statement} formally defines the problem solved in this paper.  After introducing the required preliminaries in Section~\ref{sec:preliminaries}, Section~\ref{sec:phodcos} presents the PHODCOS algorithm, by providing its derivation and a numerical analysis. Section~\ref{sec:example} shows an illustrative example by testing PHODCOS in the Lunar Gateway before Section~\ref{sec:conclusions} presents the conclusions.

%%%%%%%%%%%%%%%%%%%%%%%%%%%%%%%%%%%%%%
\section{How to frame a curve?}~\label{sec:related_work}
%%%%%%%%%%%%%%%%%%%%%%%%%%%%%%%%%%%%%%

The assignment of a moving frame to a curve is a well-studied problem, with several solutions available in the existing literature. The most commonly used are the Frenet Serret Frame (FSF)~\cite{struik1961lectures,abbena2017modern} and the Parallel Transport Frame (PTF)~\cite{bishop1975there,hanson1995parallel,wang2008computation}. From a computational perspective, the FSF is purely analytical, as it is derived from local derivative information in closed form. In contrast, the PTF depends on global information and requires a numerical method for its computation.

The FSF is frequently chosen in the literature because of its straightforward analytical nature. However, it fails when the reference path is straight (i.e., zero curvature) and introduces an unnecessary twist to its first component. The PTF is the most recognized alternative, as it avoids both singularities and twists. Yet, current techniques for calculating PTFs~\cite{wang2008computation} only focus on the rotation matrix and neglect the angular velocity. Moreover, these methods are discrete, preventing the computation of higher-order derivatives. In the age of data-driven methods, prominent decision-making algorithms increasingly rely on learning and optimization~\cite{malyuta2021convex,kaufmann2023champion}, making this constraint particularly noteworthy.

% To overcome these limitations, PHODCOS relies on a less known, yet full of potential, choice, namely, the Euler Rodrigues Frame (ERF)~\cite{choi2002euler}. The reasons why the ERF has a diminished presence compared to its better established counterparts are twofold: First, due to their dependence on PH curves and their algebra, retrieving an ERF from any arbitrary curve, while maintaining a bounded approximation error, is a daunting task~\cite{farouki2008pythagorean}. Broadly speaking, existing methods either rely on numerical and iterative methods, resulting in computationally expensive algorithms without any guarantees on the approximation error~\cite{albrecht2020spatial} or are given in closed-form, but require certain assumptions over the given curve, limiting their universality~\cite{farouki2012design}. Second, the underlying complex algebra considerably raises the entry barrier, thereby jeopardizing its applicability to other scientific or engineering applications~\cite{otto2021geometric,arrizabalaga2022spatial,arrizabalaga2023sctomp}.

To address these challenges, PHODCOS opts for a lesser-known yet promising alternative, the Euler Rodrigues Frame (ERF)~\cite{choi2002euler}. There are two main reasons for the ERF's reduced visibility compared to the more established counterparts: First, because they depend on PH curves and their algebra, extracting an ERF from any arbitrary curve while keeping a bounded approximation error is a daunting task~\cite{farouki2008pythagorean}.  Broadly speaking, existing methods either rely on numerical and iterative methods, resulting in computationally expensive algorithms without any guarantees on the approximation error~\cite{albrecht2020spatial} or are given in closed-form, but require certain assumptions over the given curve, limiting their universality~\cite{farouki2012design}. The second reason also relates to the ERF's inherent complex algebra, as it significantly raises the entry barrier, thus hampering its use in other scientific or engineering contexts~\cite{otto2021geometric,arrizabalaga2022spatial,arrizabalaga2023sctomp,arrizabalaga2024differentiable}.

% In this work we present a method that offers a solution to both shortcomings. Regarding the first limitation --the universal and efficient computation of an ERF frame--, we rely on hermite interpolation scheme similar to~\cite{vsir2007}, which is given in closed form, i.e., does not need iterative or optimization methods, and is applicable to any $C^4$ curve, an assumption that is very easy to fulfill. When it comes to the second limitation regarding the usability, we provide an open source implementation of our method, providing practitioners with a toolkit and enabling its usage without the need of understanding the underlying mathematical intricacies.
In this work, we introduce a technique that addresses both major limitations. For the first drawback, which involves the universal and efficient computation of an ERF frame, we utilize a hermite interpolation method similar to that in~\cite{vsir2007}. This approach is expressed in closed form, eliminating the need for iterative or optimization procedures, and can be applied to any $C^4$ curve, a requirement that is easily met. Concerning the second limitation related to usability, we offer an open-source implementation of our technique, equipping practitioners with a practical toolkit and facilitating its application without requiring a deep understanding of the underlying mathematical concepts.

% similar to~\cite{vsir2007}, leverages hermite interpolation to convert a given curve into a PH, thereby allowing us to retrieve its associated ERF. In contrast to the existing methods, our solution is computationally lightweight and 

% --similar to~\cite{vsir2007}-- by leveraging hermite interpolation to approximate the given curve

% yields a method that solve both shortcomings. Regarding the first limitation --the universal and efficient computation of an ERF frame--, we rely on hermite interpolation scheme similar to~\cite{vsir2007}, which is given in closed form, i.e., does not need iterative or optimization methods, and is applicable to any $C^4$ curve, an assumption that is very easy to fulfill. Secondly, we provide an open source implementation of our method, providing practitioners with a toolkit and enabling its usage without the need of understanding the underlying mathematical intricacies.

%%%%%%%%%%%%%%%%%%%%%%%%%%%%%%%%%%%%%%
\section{Problem Statement}\label{sec:problem_statement}
%%%%%%%%%%%%%%%%%%%%%%%%%%%%%%%%%%%%%%
\noindent Let $\Gamma$ refer to a geometric reference and be defined as a curve whose position is given by a function $\bm{\gamma}\,:\,\mbb{R}\rightarrow \mbb{R}^3$ that depends on path parameter $\xi$
\begin{equation}\label{eq:gamma_path}
  \Gamma = \{\xi\in\left[\xi_0,\xi_f\right] \subseteq \mbb{R}\rightarrow\bm{\gamma}(\xi) \in \mbb{R}^{3}\}\,,
\end{equation}
which is assumed to be of differentiability class $C^m$, i.e., $\bm{\gamma}\in C^m(\xi)$, where $m\geq2$.

\begin{comment}
\begin{assumption}[\bfseries Path continuity]
Path $\Gamma$ in~\eqref{eq:gamma_path} is of differentiability class $C^m$, i.e., $\bm{\gamma}\in C^m(\xi)$.
\end{assumption}
\end{comment}
\begin{problem}\label{problem:path_parameterization}%[\bfseries Path-Parameterization]
  Given path $\Gamma$ in~\eqref{eq:gamma_path}, design an algorithm that approximates it by 
  \begin{itemize}[leftmargin=2em]
    \item[\textbf{P1}] finding a parametric path-function $\bm{p}\,:\,\mbb{R}\rightarrow \mbb{R}^{3}$ that guarantees the approximation error to remain within a desired tolerance $\epsilon$:
    \begin{equation}
      E = \max_{\xi\in\left[\xi_0,\xi_f\right]}{||\bm{\gamma}(\xi)-\bm{p}(\xi)||} \leq \epsilon
    \end{equation}
    \item[\textbf{P2}] associating an adapted-frame\footnote[4]{Adapted-frame: A moving rotation matrix $\text{R}=\left[\bm{e_1},\bm{e_2},\bm{e_3}\right]$ associated to path $\Gamma$ whose first component $\bm{e_1}$ coincides with the tangent of the path $\dv{\bm{\gamma}(\xi)}{\xi}$.}, whose components are given by a parametric frame-function $\text{R}\,:\,\mbb{R}\rightarrow\mbb{R}^{3x3}$, and its angular velocity is defined by a parametric omega-function $\bm{\omega}\,:\,\mbb{R}\rightarrow\mbb{R}^{3}$,
    \item[\textbf{P3}] maintaining a minimum differentiability class of $C^2$ for all parametric-functions, i.e., $\{\bm{p},\text{R},\bm{\omega}\}\in C^m(\xi)$ with $m\geq2$.
  \end{itemize}
\end{problem}
\begin{comment}
\begin{remark}
The path-function $\bm{p}$ may analytically differ from $\bm{\gamma}$, but its image is equivalent to the geometric location $\Gamma$, i.e., $\{\bm{\gamma}\neq\bm{p}\,,\, \bm{\gamma}(\xi)\equiv\bm{p}(\xi)\}\,,\,\forall\xi\in\left[\xi_0,\xi_f\right]$. %the inequivalency is incorrect since p MAY differ from gamma, how to represent this mathematically?
\end{remark}
\end{comment}

%%%%%%%%%%%%%%%%%%%%%%%%%%%%%%%%%%%%%%
\section{Preliminaries}\label{sec:preliminaries}
%%%%%%%%%%%%%%%%%%%%%%%%%%%%%%%%%%%%%%
\noindent The algorithm proposed for the presented in Section~\ref{sec:problem_statement} leverages Pythagorean Hodograph (PH) curves, a subset of polynomials whose algebra relies on quaternions. To account for this, in this section we outline the required mathematical notation and background.
\subsection{Quaternions}
\noindent Quaternion algebra considers a four-dimensional vector space and is expressed in the form
\begin{equation}\label{eq:quaternion}
\mcl{A} = a + a_x\bi + a_y\bj + a_z\bz\,,
\end{equation}
where $\{a,a_x,a_y,a_z\}\in\mbb{R}$ and the canonical basis $\{\bm{1} = (1,0,0,0),\,\bi = (0,1,0,0),\, \bj = (0,0,1,0),\, \bz = (0,0,0,1)\}$ fulfills the following multiplication rules:
\begin{gather*}
    \bi^2 = \bj^2 = \bz^2 = \bi\bj\bz = -1\,,\\
    \bi\bj = -\bj\bi = \bz,\,\,\bj\bz = -\bz\bj = \bi,\,\,\bz\bi = -\bi\bz =\bj\,.
\end{gather*}
For a more detailed introduction, please refer to \cite{kuipers1999quaternions}. The conjugate of quaternion \eqref{eq:quaternion} is $\mcl{A}^* = a - a_x\bi - a_y\bj - a_z\bm{z} $ and its absolute value is defined as
\begin{equation*}
|\mcl{A}| = \sqrt{\mcl{A}\mcl{A^*}} = \sqrt{\mcl{A^*}\mcl{A}} = \sqrt{a^2 + a_x^2 + a_y^2 + a_z^2}\,.
\end{equation*}
Unit quaternions,  characterized by $|A|=1$, are a singularity-free and minimal representation of $\sothree$. We refer to unit quaternions with vanishing $\bj$ and $\bz$ components as
\begin{equation*}
    Q_\phi = \cos{\phi}+\bi\sin{\phi}\,.
\end{equation*}
Pure vector quaternions ascribe to quaternions whose real part vanishes, i.e., $a=0$. In the sequel we will leverage the following definitions and lemmas.
\begin{comment}
\begin{definition}[\bfseries Rotation and reflections mappings]
We define mappings $m_\theta,m_{\bz}\,:\,\mbb{Q}\rightarrow\mbb{Q}$ by
\begin{equation}
    m_\theta(\mcl{A}) = \mcl{Q}(\theta)\mcl{A}\mcl{Q}(-\theta)\,\textup{ and }\,m_{\bz}(\mcl{A}) = \bz\mcl{A}\bz
\end{equation}
The mappings $m_\theta,\,m_{\bz}$ preserve pure quaternions and have the following geometrical meaning. For any pure vector quaternion $a_x\bi + a_y\bj + a_z\bz \in \mbb{R}^3$, the mapping $m_{\theta}:\mbb{R}^3\rightarrow\mbb{R}^3$ is the rotation $2\theta$ through the $\bi$ axis and $m_{\bz}:\mbb{R}^3\rightarrow\mbb{R}^3$ is the reflection to the plane spanned by $\bm{i},\,\bm{j}$.
\end{definition}
\end{comment}

\begin{definition}[\bfseries Commutative operation]\label{definition:comm_operation}
The commutative multiplication on the quaternions $\mcl{A}$ and $\mcl{B}$ is defined as
\begin{equation}
\mcl{A}\star\mcl{B}\coloneqq\frac{1}{2}\left(\mcl{A}\bi\mcl{B}^* + \mcl{B}\bi\mcl{A}^*\right)
\end{equation}
In the case of two equivalent quaternions, the respective $n$-th power is represented as $\mcl{A}^{n\star} = \underbrace{\mcl{A}\star\mcl{A}\star\cdots\star\mcl{A}}_{n}$ .
\begin{remark}
$\mcl{A}\star \mcl{B}$ is equal to the vector part of $\mcl{A}\bi\mcl{B}$, and thus is a pure quaternion. 
\end{remark}
\end{definition}
\begin{comment}
\begin{lemma}
The multiplication $\star$ is invariant under mappings $m_{\theta}$ and $m_{\bz}$, i.e., for any quaternions $\mcl{A}$ and $\mcl{B}$,
\begin{align*}
    m_{\theta}(\mcl{A}\star\mcl{B})&=m_{\theta}(\mcl{A})\star m_{\theta}(\mcl{B})\,,\\
    m_{\bz}(\mcl{A}\star\mcl{B})&=m_{\bz}(\mcl{A})\star m_{\bz}(\mcl{B})\,.
\end{align*}
\begin{proof}
See Lemma 2.4 in~\cite{vsir2007}.
\end{proof}
\end{lemma}
\end{comment}
Later on, we will observe that the solutions to the quaternion equivalents of quadratic and linear equations play a crucial role in the development of PH hermite interpolants. In the subsequent lemmas, we provide a detailed description of these solutions.

\begin{lemma}\label{lemma:AB}
Given a pure quaternion $\bm{a}$ and a non-zero quaternion $\mcl{B}$, the set of all solutions to the linear equation
\begin{subequations}\label{eq:AiBa}
\begin{equation}
    \mcl{X}\star\mcl{B} = \bm{a}
\end{equation}
form the one-parameter family
\begin{equation}
    \mcl{X}_\tau = -\frac{(\tau+\bm{a})\mcl{B}\bi}{|\mcl{B}|^2}\quad\text{with}\,\, \tau\in\mbb{R}\,.
\end{equation}
\begin{comment}
Furthermore, this parameterization of the solution system is consistent with the mappings $m_{\theta}$ and $m_{\bm{\bz}}$ in the following manner. If we denote the solutions for the transformed data by $m_{\theta}$ as $\tilde{\mcl{X}}_{\tau}$ and the solutions for the transformed data by $m_{\bm{\bz}}$ as $\bar{\mcl{X}}_{\tau}$, then for any $\tau$, the relationship holds as follows:
\begin{equation}
    \tilde{\mcl{X}}_{\tau} = m_\theta(\mcl{X}_{\tau})\:\:\text{and}\:\:\bar{\mcl{X}}_{\tau} = m_{\bz}(\mcl{X}_{-\tau})
\end{equation}
\begin{proof}
    See Lemma 2.5 in~\cite{vsir2007}.
\end{proof}
\end{comment}
\end{subequations}
\end{lemma}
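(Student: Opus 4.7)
The plan is to unfold the $\star$-product using Definition~\ref{definition:comm_operation}, recognize that the resulting expression is precisely the vector part of a single quaternion product, and then invert that product. Concretely, since $\mcl{B}$ is invertible and $\bm{i}$ is a unit quaternion, once the equation is reduced to a bona fide quaternion equation for $\mcl{X}$, multiplying on the right by $\mcl{B}^{-1} = \mcl{B}/|\mcl{B}|^2$ and then by $\bm{i}^{-1} = -\bm{i}$ will solve for $\mcl{X}$ directly. The only remaining freedom will be a scalar one, which will give the parameter $\tau$.

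In more detail, I would first note the identity $(\mcl{X}\bm{i}\mcl{B}^*)^* = \mcl{B}(-\bm{i})\mcl{X}^* = -\mcl{B}\bm{i}\mcl{X}^*$, so that
\begin{equation*}
\mcl{X}\star\mcl{B} \;=\; \tfrac{1}{2}\bigl(\mcl{X}\bm{i}\mcl{B}^* + \mcl{B}\bm{i}\mcl{X}^*\bigr) \;=\; \tfrac{1}{2}\bigl(\mcl{X}\bm{i}\mcl{B}^* - (\mcl{X}\bm{i}\mcl{B}^*)^*\bigr),
\end{equation*}
which is exactly the vector part of $\mcl{X}\bm{i}\mcl{B}^*$ (consistent with the remark following Definition~\ref{definition:comm_operation}). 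Since $\bm{a}$ is pure, the equation $\mcl{X}\star\mcl{B}=\bm{a}$ is therefore equivalent to
\begin{equation*}
\mcl{X}\bm{i}\mcl{B}^* \;=\; \tau + \bm{a}, \qquad \tau\in\mbb{R},
\end{equation*}
where $\tau$ is the (undetermined) real part of $\mcl{X}\bm{i}\mcl{B}^*$.

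Next I would solve this last equation for $\mcl{X}$. Right-multiplying by $(\mcl{B}^*)^{-1}=\mcl{B}/|\mcl{B}|^2$ gives $\mcl{X}\bm{i}=(\tau+\bm{a})\mcl{B}/|\mcl{B}|^2$, and right-multiplying by $\bm{i}^{-1}=-\bm{i}$ produces the claimed
\begin{equation*}
\mcl{X}_\tau \;=\; -\frac{(\tau+\bm{a})\,\mcl{B}\,\bm{i}}{|\mcl{B}|^2}.
\end{equation*}
To close the argument, I would verify the converse: substituting this $\mcl{X}_\tau$ into $\mcl{X}\bm{i}\mcl{B}^*$ recovers $\tau+\bm{a}$, whose vector part is $\bm{a}$, so every member of the family solves the original equation. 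Parameterizing by $\tau\in\mbb{R}$ exhausts all solutions because the scalar part of $\mcl{X}\bm{i}\mcl{B}^*$ is genuinely free.

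I do not expect any serious obstacle here; the proof is short once one spots that $\mcl{X}\star\mcl{B}$ is the vector part of $\mcl{X}\bm{i}\mcl{B}^*$. The only mild subtlety is keeping the non-commutative order of quaternion multiplication straight when inverting $\mcl{B}^*$ and $\bm{i}$, and being careful that $\bm{a}$ pure implies the scalar component of the right-hand side is unconstrained, which is precisely what yields the one-parameter family.
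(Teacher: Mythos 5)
Your proof is correct and is essentially the argument the paper relies on: the paper itself defers to Lemma~2.5 of the cited reference, whose proof proceeds exactly as you do --- identify $\mcl{X}\star\mcl{B}$ with the vector part of $\mcl{X}\bi\mcl{B}^*$, absorb the unconstrained real part into the parameter $\tau$, and invert $\mcl{B}^*$ and $\bi$ on the right. As a minor aside, your conjugation identity also implicitly corrects the remark following Definition~\ref{definition:comm_operation}, which should say the vector part of $\mcl{A}\bi\mcl{B}^*$ rather than of $\mcl{A}\bi\mcl{B}$.
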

\begin{lemma}\label{lemma:A2}
Given that $\bm{a}$ is a specific pure quaternion and not a negative multiple of $\bi$, then every solution to the quadratic equation
\begin{subequations}\label{eq:AiAa}
\begin{equation}
    \mcl{X}^{2\star} = \bm{a}
\end{equation}
form the one-parameter family
\begin{equation}
    \mcl{X}_\phi = \sqrt{|\bm{a}|}\frac{\frac{\bm{a}}{|\bm{a}|}+\bi}{|\frac{\bm{a}}{|\bm{a}|}+\bi|}\mcl{Q}(\phi) \quad\text{with}\,\,\phi\in[0,2\pi)\,.
\end{equation}
\end{subequations}
\begin{comment}
In addition, this parameterization of the solution system is compatible with the mappings $m_{\theta}$ and $m_{\bm{\bz}}$ as follows. Denoting the solutions for the transformed data by $m_{\theta}$ as $\tilde{\mcl{X}}_{\phi}$ and the solutions for the transformed data by $m_{\bm{\bz}}$ as $\bar{\mcl{X}}_{\phi}$, then for any $\phi$, the relationship holds as follows:
\begin{equation}
    \tilde{\mcl{X}}_{\phi} = m_\theta(\mcl{X}_{\phi})\:\:\text{and}\:\:\bar{\mcl{X}}_{\phi} = m_{\bz}(\mcl{X}_{-\phi})
\end{equation}
\begin{proof}
    See Lemma 2.6 in~\cite{vsir2007}.
\end{proof}
\end{comment}
\end{lemma}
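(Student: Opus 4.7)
The plan is to unpack the commutative product, reduce to a conjugation equation on unit quaternions, exhibit one explicit solution, and then characterize all others via the stabilizer of $\bi$ under conjugation. By the Remark following Definition 1, $\mcl{X}\star\mcl{X}$ equals the vector part of $\mcl{X}\bi\mcl{X}^*$; since $\bi$ is pure and conjugation by $\mcl{X}$ preserves purity, $\mcl{X}\bi\mcl{X}^*$ is already a pure quaternion, so the equation $\mcl{X}^{2\star}=\bm{a}$ is equivalent to $\mcl{X}\bi\mcl{X}^* = \bm{a}$.

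Taking moduli and using multiplicativity of $|\cdot|$ gives $|\mcl{X}|^2 = |\bm{a}|$, so every solution must satisfy $|\mcl{X}| = \sqrt{|\bm{a}|}$. Writing $\mcl{X} = \sqrt{|\bm{a}|}\,\mcl{V}$ with $\mcl{V}$ a unit quaternion reduces the problem to finding all unit $\mcl{V}$ satisfying $\mcl{V}\bi\mcl{V}^* = \bm{a}/|\bm{a}|$, i.e., all unit quaternions whose conjugation action sends $\bi$ onto the unit pure quaternion $\bm{a}/|\bm{a}|$. The hypothesis that $\bm{a}$ is not a negative multiple of $\bi$ ensures $\bm{a}/|\bm{a}| + \bi \neq 0$, so I can define $\mcl{U} = \bigl(\bm{a}/|\bm{a}| + \bi\bigr)/\bigl|\bm{a}/|\bm{a}| + \bi\bigr|$. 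Since $\mcl{U}$ is a unit pure quaternion, it encodes a half-turn about its own axis, which is the bisector of $\bi$ and $\bm{a}/|\bm{a}|$; a direct calculation using $\bm{v}^2=-1$ for pure unit $\bm{v}$ then confirms $\mcl{U}\bi\mcl{U}^* = \bm{a}/|\bm{a}|$, so $\sqrt{|\bm{a}|}\,\mcl{U}$ is one explicit solution.

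To enumerate the rest, I would set $\mcl{Y} := \mcl{U}^*\mcl{V}$, which is itself a unit quaternion, and rewrite $\mcl{V} = \mcl{U}\mcl{Y}$. Substitution turns the defining equation into $\mcl{U}\mcl{Y}\bi\mcl{Y}^*\mcl{U}^* = \mcl{U}\bi\mcl{U}^*$, which collapses to $\mcl{Y}\bi\mcl{Y}^* = \bi$, equivalently $\mcl{Y}\bi = \bi\mcl{Y}$. Expanding $\mcl{Y} = y_0 + y_1\bi + y_2\bj + y_3\bz$ and using the multiplication rules, this commutativity condition forces $y_2 = y_3 = 0$, so $\mcl{Y} = \cos\phi + \bi\sin\phi = \mcl{Q}(\phi)$ for some $\phi\in[0,2\pi)$. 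Substituting back yields $\mcl{X} = \sqrt{|\bm{a}|}\,\mcl{U}\,\mcl{Q}(\phi) = \mcl{X}_\phi$, which is the claimed family. The main obstacle is the direct verification that $\mcl{U}\bi\mcl{U}^* = \bm{a}/|\bm{a}|$: the computation is mechanical but requires careful bookkeeping with the noncommutative quaternion product, and the factor $1+\bi\cdot(\bm{a}/|\bm{a}|)$ that appears and cancels in numerator and denominator is exactly what would vanish in the excluded case, which is why $\bm{a}$ must not be a negative multiple of $\bi$.
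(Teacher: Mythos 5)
Your proposal is correct and, unlike the paper, it is self-contained: the paper offers no argument for this lemma at all, simply deferring to Lemma~2.6 of the cited reference \cite{vsir2007}. Your route --- note that $\mcl{X}^{2\star}=\mcl{X}\bi\mcl{X}^*$ is pure directly from Definition~\ref{definition:comm_operation}, extract $|\mcl{X}|=\sqrt{|\bm{a}|}$ by multiplicativity of the norm, exhibit the particular solution as the half-turn about the bisector of $\bi$ and $\bm{a}/|\bm{a}|$, and then identify the stabilizer of $\bi$ under conjugation as $\{\mcl{Q}(\phi)\}$ --- is exactly the classical argument for inverting the map $\mcl{X}\mapsto\mcl{X}\bi\mcl{X}^*$ in the PH literature, and every step checks out: the cancellation of the factor $1+\bi\cdot(\bm{a}/|\bm{a}|)$ is indeed where the hypothesis that $\bm{a}$ is not a negative multiple of $\bi$ enters, and the commutation computation forcing $y_2=y_3=0$ is routine. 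Two cosmetic remarks: the identity $\mcl{X}\star\mcl{X}=\mcl{X}\bi\mcl{X}^*$ follows from the definition itself rather than from the Remark you cite (the Remark states that $\mcl{A}\star\mcl{B}$ is the vector part of $\mcl{A}\bi\mcl{B}$, without the conjugate), and strictly speaking one should also exclude $\bm{a}=\bm{0}$ for the normalization $\bm{a}/|\bm{a}|$ to make sense --- an edge case the lemma statement itself glosses over. Neither affects the validity of your argument.
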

\subsection{Pythagorean Hodograph (PH) curves}
\noindent PH curves are a subset of polynomial curves whose parametric speed is a polynomial of the path parameter $\xi$ \cite{farouki2008pythagorean}. Decomposing the components of the path-function as $\bm{p}(\xi) = \left[x(\xi),y(\xi),z(\xi)\right]^\trsp$, the condition for a polynomial to be a PH curve is equivalent to
\begin{gather}\label{eq:ph_cond}
    \sigma^2(\xi) = {x'}^2(\xi)+ {y'}^2(\xi) + {z'}^2(\xi)\,,
\end{gather}
where $\sigma(\xi)$ is a polynomial. More intuitively, as its name suggests, the curve's hodograph $\bm{h}(\xi) = \dv{\bm{p}(\xi)}{\xi}=  \left[x'(\xi),y'(\xi),z'(\xi)\right]^\trsp$, i.e., derivative with respect to path parameter $\xi$, satisfies a Pythagorean condition.
As stated in \cite{dietz1993algebraic}, \eqref{eq:ph_cond} only holds true if  $\{u,v,p,q\}\,:\,\mbb{R}\rightarrow\mbb{R}$ polynomials on path parameter $\xi$ exist, such that
\begin{subequations}\label{eq:ph_cond2}
\begin{align}
x'(\xi) &= u^2(\xi) + v^2(\xi) - p^2(\xi) - q^2(\xi)\,,\\
y'(\xi) &= 2u(\xi)q(\xi) - 2v(\xi)p(\xi)\,,\\
z'(\xi) &= 2v(\xi)q(\xi) - 2u(\xi)p(\xi)\,,\\
\sigma(\xi) &=  u^2(\xi) + v^2(\xi) + p^2(\xi) + q^2(\xi)\,.
\end{align}
\end{subequations}
Given the four-dimensional structure of \eqref{eq:ph_cond2}, combining it with condition~\eqref{eq:ph_cond} and converting it to quaternion algebra allows for a compact definition of the PH condition \cite{dietz1993algebraic,farouki2002structural}. This leads to the following description of PH curves.
\begin{definition}[\bfseries PH curves]
A space polynomial curve $\bm{p}(\xi) = x(\xi)\bi+y(\xi)\bj+z(\xi)\bz$ is a PH curve if and only if there exists a quaternion polynomial $\mcl{A}(\xi) = u(\xi)+v(\xi)\bi+p(\xi)\bj+q(\xi)\bz$ such that the hodograph fulfills
\begin{equation}\label{eq:hodograph}
    \bm{h}(\xi) = \mcl{A}(\xi)\bi\mcl{A}^*(\xi) = \mcl{A}^{2\star}(\xi)\,.
\end{equation}
\end{definition}

\noindent From this expression it follows that a quaternion polynomial $\mcl{A}(\xi)$ of degree $n$ is related to a PH curve whose hodograph $\bm{h}(\xi)$ and path-function $\bm{p}(\xi)$ are of degree $2n$ and $2n+1$, respectively. Given their polynomial nature, they can be written in the Bernstein-Bezier form as
\begin{subequations}\label{eq:polynomial_funcs}
\begin{align}
\mcl{A}(\xi) &= \sum_{i=0}^{n}\mcl{A}_i B_i^n(\xi)\,,\label{eq:A_func}\\
\bm{h}(\xi) &= \sum_{i=0}^{2n}\bm{h}_i B_i^{2n}(\xi)\,,\label{eq:h_func}\\
\bm{p}(\xi) &= \sum_{i=0}^{2n+1}\bm{p}_i B_i^{2n+1}(\xi)=\bm{p}_0+\frac{\sum_{i=0}^{2n}\bm{h}_i B_i^{2n}(\xi)}{2n+1}\,,\label{eq:path_func}
\end{align}
where $\mcl{A}_i$, $\bm{h}_i$ and $\bm{p}_i$ refer to $i$-th control point of the polynomial functions and $B_j^{n}(\xi)=\binom{n}{j}\xi_j\left(1-\xi\right)^{n-j}$ are the Bernstein polynomials. Notice that according to eq.~\eqref{eq:path_func}, the control-points of the hodograph $\bm{h}_{0,\dots,2n}$ relate to the ones of the path-function $\bm{p}_{1,\dots,2n+1}$. More specifically, for a given starting point $\bm{p}_0\in\mbb{R}^3$,
\begin{equation}\label{eq:p_ctrlpts}
    \bm{p}_{i} = \bm{p}_0+\frac{1}{2n+1}\sum_{j=0}^{i-1}\bm{h}_j,\quad i = 1,...,2n+1\,.
\end{equation}
\end{subequations}
Another notable attribute of PH curves is that they inherit an adapted-frame, denoted as \emph{Euler Rodrigues Frame} (ERF), which also exclusively depends on the quaternion polynomial \cite{choi2002euler}.  Its respective rotation matrix is defined as
\begin{multline}\label{eq:erf}
    \text{R}(\xi) = \left[\bm{e_1}(\xi),\bm{e_2}(\xi),\bm{e_3}(\xi)\right]=\\
    \frac{\left[\mcl{A}(\xi)\bi\mcl{A}^*(\xi), \mcl{A}(\xi)\bj\mcl{A}^*(\xi), \mcl{A}(\xi)\bz\mcl{A}^*(\xi)\right]}{|\mcl{A}(\xi)|^2}\,.
\end{multline}
and its angular velocity is given by
\begin{subequations}\label{eq:angvel}
\begin{equation}
    \bm{\omega}(\xi)=\rchi_1(\xi)\bm{e_1}+\rchi_2(\xi)\bm{e_2}+\rchi_3(\xi)\bm{e_3}\,,
\end{equation}
where
\begin{equation}\label{eq:angvel_comp}
    \rchi(\xi) = \{\bm{e_2}'(\xi)\,\bm{e_3}(\xi),\bm{e_3}'(\xi)\,\bm{e_1}(\xi),\bm{e_1}'(\xi)\,\bm{e_2}(\xi)\}\,,
\end{equation}
\end{subequations}
which it likewise is exclusively reliant on the quaternion polynomial. Besides that, given that the parametric speed $\sigma(\xi)$ is a polynomial defined by the condition~\eqref{eq:ph_cond}, PH curves also allow for the closed form calculation for other geometrically meaningful properties, such as the arc-length $L(\xi)$, curvature $\kappa(\xi)$ and torsion $\tau(\xi)$:
\begin{subequations}
\begin{align}\label{eq:L_kappa_tau}
    L(\xi)&=\int_{\xi_b}^{\xi_e} \sigma(\xi)\,d\xi\\
    \kappa(\xi) &= \frac{||\bm{p}^{'}(\xi) \cross \bm{p}^{''}(\xi)||}{||\bm{p}^{'}(\xi)||^3}\\
    \tau(\xi) &= \frac{\left[\bm{p}^{'}(\xi)\cross\bm{p}^{''}(\xi)\right]\cdot\bm{p}^{'''}(\xi)}{||\bm{p}^{'''}(\xi)||^2}
\end{align}
\end{subequations}
where $\xi_b$ and $\xi_e$ are the starting and ending values for the path parameter.

From all the parametric functions introduced in this subsection~\eqref{eq:polynomial_funcs}, ~\eqref{eq:erf}, ~\eqref{eq:angvel} and ~\eqref{eq:L_kappa_tau}, it can be stated that path parameterizing a curve as defined in Section~\ref{sec:problem_statement} reduces to constructing a PH curve by finding an appropriate quaternion polynomial $\mcl{A}(\xi)$, which from now onward we will refer to as \emph{preimage}.

%%%%%%%%%%%%%%%%%%%%%%%%%%%%%%%%%%%%%%
\section{The algorithm: PHODCOS}\label{sec:phodcos}
%%%%%%%%%%%%%%%%%%%%%%%%%%%%%%%%%%%%%%

\noindent To address problem~\eqref{problem:path_parameterization}, we present \textbf{PHODCOS}: a \textbf{P}ythagorean \textbf{HO}drograph-based \textbf{D}ifferentiable   \textbf{CO}ordinate \textbf{S}ystem. It leverages the appealing properties of PH curves, i.e., their compact representation via the preimage and an inherited adapted-frame (\textbf{P2}), to convert a given spatial curve into a piecewise PH curve, while ensuring boundedness on the approximation error (\textbf{P1}) and continuity on the parametric functions (\textbf{P3}). In this section we provide further details on the underlying methodology and its compliance with these three features.

\subsection{Piecewise PH curves with $C^2$ parametric functions}\label{subsec:piecewise_ph_c2}
\noindent Paths parameterized by PHODCOS will only be applicable to second-order optimization methods if all parametric functions are, at least, twice differentiable i.e., $\{\bm{p},\text{R},\bm{\omega}\}\in C^2(\xi)$ (see \textbf{P3}). As mentioned earlier, we approximate the original curve with a piecewise PH curve, and thus, special care needs to be taken across intersections. More specifically, out of all three parametric functions, the angular velocity of the adapted-frame is the one with highest preimage degree, i.e., $\bm{\omega}(\xi) = f(\mcl{A}(\xi),\mcl{A}'(\xi))$. Therefore, $C^2$ in all parametric functions is equivalent to $C^3$ in the preimage $\mcl{A}(\xi)$, leading to a $C^4$ condition in the path-function $\bm{p}(\xi)$\footnote{Given that $\bm{\omega}(\xi)$ depends on $\mcl{A}'(\xi)$, it will only will be $C^2$ if, at least, $\mcl{A}'''(\xi)$ is continuous, implying that $\mcl{A}\in C^3(\xi)$. The hodograph $\bm{h}(\xi)$, and consequently also the preimage $\mcl{A}(\xi)$, relates by its derivative to the path-function $\bm{p}(\xi)$. Therefore, $\{\bm{h},\mcl{A}\}\in C^3(\xi)$ infers $\bm{p}\in C^4(\xi)$.}:
\begin{equation}\label{eq:continuity_condition}
    \{\sigma,\,\text{R},\,\bm{\omega}\} \in C^2(\xi) \Rightarrow \mcl{A}\in C^3(\xi)\Rightarrow \bm{p}\in C^4(\xi)
\end{equation}
%as demanded by the latter point \textbf{P3} of the Path-Parameterization problem~\eqref{problem:path_parameterization}
\subsection{PH curves of degree 17}\label{subsec:PH_deg17}
\vspace{1mm}
\subsubsection{Defining the degree}

\noindent To satisfy the continuity condition~\eqref{eq:continuity_condition}, we split the original curve into multiple segments and approximate each of them by conducting a $C^4$ hermite interpolation. This is equivalent to specifying position ($0$), velocity ($1$), acceleration ($2$), jerk ($3$) and snap ($4$) starting at ending conditions, resulting in $30$\footnote{Amount of interpolation constraints per interval: $30=2(\text{start and end})\cdot3(\text{x, y and z})\cdot5(\text{position to snap})$} scalar constraints per interval. Imposing the starting position $\bm{p}_0$, related to the constant obtained from the integration of the hodograph~\eqref{eq:h_func} into~\eqref{eq:path_func}, reduces the amount of free conditions to $27$. Given the four-dimensional nature of the preimage $\mcl{A}(\xi)$, a preimage of order $7$ ($4\cdot7=28>27$) accounts for sufficient degrees of freedom. However, in a similar way to \cite{vsir2007}, increasing the order to $8$ guarantees the existence of a solution for any interpolation data. From~\eqref{eq:polynomial_funcs}, it can be stated that defining the order of preimage $\mcl{A}(\xi)$ as $n=8$ leads to a hodograph $\bm{h}(\xi)$ of degree $2n=16$ and a path-function $\bm{p}(\xi)$ of degree $2n+1=17$.

\subsubsection{Hodograph and path-function}

\noindent Combining the definition of the preimage in~\eqref{eq:A_func} with the quadratic expression in~\eqref{eq:hodograph} allows for associating the control-points of the hodograph $\bm{h}_{0,\dots,16}$ to the ones of the preimage $\mcl{A}_{0,\dots,8}$. After some simplifications\footnote{Intermediary steps for computing the hodograph's control-points can be found in Appendix~\ref{appendix:h_ctrlpts}.}, these are given by the following expressions:
\begin{subequations}\label{eq:h_ctrlpts}
\allowdisplaybreaks
\begin{align}
\bm{h}_0 &= \Asq{0}\,,\label{eq:h0}\\
\bm{h}_1 &= \AstA{0}{1}\,,\label{eq:h1}\\
\bm{h}_2 &= \frac{1}{15}(7\AstA{0}{2}+8\Asq{1})\,,\label{eq:h2}\\
\bm{h}_3 &= \frac{1}{10}(2\AstA{0}{3}+8\AstA{1}{2})\,,\label{eq:h3}\\
\bm{h}_4 &= \frac{1}{65}(5\AstA{0}{4}+32\AstA{1}{3}+28\Asq{2})\,,\label{eq:h4}\\
\bm{h}_5 &=\frac{1}{39}(\AstA{0}{5}+10\AstA{1}{4}+28\AstA{2}{3})\label{eq:h5}\,,\\
\begin{split}
\bm{h}_6 &= \frac{1}{143}(\AstA{0}{6}+16\AstA{1}{5}+70\AstA{2}{4}+\\
&\quad56\Asq{3})\label{eq:h6}\,,
\end{split}\\
\begin{split}
\bm{h}_7 &= \frac{1}{715}(\AstA{0}{7}+28\AstA{1}{6}+196\AstA{2}{5}+\\
&\quad490\AstA{3}{4})\label{eq:h7}\,,
\end{split}\\
\begin{split}
\bm{h}_8 &= \frac{1}{6435}(\AstA{0}{8} + 64\AstA{1}{7} + 784\AstA{2}{6} +\\
 &\quad3136\AstA{3}{5} + 2450\Asq{4})\,,
\end{split}\\
\begin{split}
\bm{h}_9 &= \frac{1}{715}(\AstA{1}{8}+28\AstA{2}{7}+196\AstA{3}{6}+\\
&\quad490\AstA{4}{5})\label{eq:h9}\,,
\end{split}\\
\begin{split}
\bm{h}_{10} &= \frac{1}{143}(\AstA{2}{8}+16\AstA{3}{7}+70\AstA{4}{6}+\\
&\quad56\Asq{5})\label{eq:h10}\,,
\end{split}\\
\bm{h}_{11} &= \frac{1}{39}(\AstA{3}{8}+10\AstA{4}{7}+28\AstA{5}{6})\label{eq:h11}\,,\\
\bm{h}_{12} &= \frac{1}{65}(5\AstA{4}{8}+32\AstA{5}{7}+28\Asq{6})\,,\label{eq:h12}\\
\bm{h}_{13} &= \frac{1}{10}(2\AstA{5}{8}+8\AstA{6}{7})\,,\label{eq:h13}\\
\bm{h}_{14} &= \frac{1}{15}(7\AstA{6}{8}+8\Asq{7})\,,\label{eq:h14}\\
\bm{h}_{15} &= \AstA{7}{8}\,,\label{eq:h15}\\
\bm{h}_{16} &= \Asq{8}\,.\label{eq:h16}
\end{align}
\end{subequations}
Putting together these control-points with eq.~\eqref{eq:p_ctrlpts} and a predetermined starting point $\bm{p}_0$, the path-function's control-points $\bm{p}_{1,\dots,17}$ -- and as a result, the path-function itself -- are fully defined.

\subsection{Construction of $C^4$ hermite interpolants}\label{subsec:construction_c4}
\noindent In this subsection we provide a thorough description on how each interval of the original curve is approximated by a piecewise PH curve. 
\subsubsection{Boundary conditions}
\noindent At every segment, given $C^4$ hermite boundary data -- end points $\{\bm{p}_b,\bm{p}_e\}$\footnote{For a given parametric function $f(\xi)$, $f_b=f(0)$ and $f_e=f(1)$.}, and the first four derivative vectors  $\{\bm{v}_b,\bm{v}_e,\bm{a}_b,\bm{a}_e,\bm{j}_b,\bm{j}_e, \bm{s}_b,\bm{s}_e\}$ --, we construct a spatial PH curve $\bm{p}(\xi)$.
Considering the  Bernstein-Bezier form of the hodograph $\bm{h}(\xi)$ in~\eqref{eq:h_func} and its relationship with respect to the interpolant data, i.e., $\bm{v}(\xi) = \bm{h}(\xi),\,  \bm{a}(\xi) = \bm{h}'(\xi),\, \bm{j}(\xi) = \bm{h}''(\xi),\, \bm{s}(\xi) = \bm{h}'''(\xi)$, it results that
\begin{subequations}\label{eq:c4_data}
\begin{align}
\bm{v}_b &= \bm{h}_0\,,\label{eq:vb}\\
\bm{v}_e &= \bm{h}_{16}\,,\label{eq:ve}\\
\bm{a}_b &= -16\bm{h}_0+16\bm{h}_1\,,\label{eq:ab}\\
\bm{a}_e &= -16\bm{h}_{15}+16\bm{h}_{16}\,,\label{eq:ae}\\
\bm{j}_b &= 240\bm{h}_0-480\bm{h}_1+240\bm{h}_2\,,\label{eq:jb}\\
\bm{j}_e &= 240\bm{h}_{14}-480\bm{h}_{15}+240\bm{h}_{16}\,,\label{eq:je}\\
\bm{s}_b &= -3360\bm{h}_0 + 10080 \bm{h}_1 - 10080 \bm{h}_2 + 3360 \bm{h}_3\,,\label{eq:sb}\\
\begin{split}
\bm{s}_e &= -3360\bm{h}_{13} + 10080 \bm{h}_{14} - 10080 \bm{h}_{15}+\\
&\quad3360 \bm{h}_{16}\,.\label{eq:se}
\end{split}
%\bm{s}_e &= -3360\bm{h}_{13} + 10080 \bm{h}_{14} - 10080 \bm{h}_{15}+\quad3360 \bm{h}_{16}\,.\label{eq:se}
\end{align}
\end{subequations}
Moreover, introducing the interpolation endpoints on~\eqref{eq:p_ctrlpts} leads to
\begin{equation*}
    \bm{p}_e-\bm{p}_b=\frac{1}{17}\sum_{i=0}^{16}\bm{h}_i\,.
\end{equation*}
and combining it with~\eqref{eq:h_ctrlpts}, after some simplifications, results in
\begin{subequations}\label{eq:A4_formula}
\begin{equation}
    \mcl{A}_{\bm{p}}^{2\star} = \frac{490}{21879}\left(\bm{p}_e-\bm{p}_b\right)-\bm{c}_{\bm{p}}
\end{equation}
with
\begin{align}
\begin{split}
    \mcl{A}_{\bm{p}} &= \frac{1}{442}\mcl{A}_{0}+\frac{5}{663}\mcl{A}_{1}+\frac{35}{2431}\mcl{A}_{2}+\frac{49}{2431}\mcl{A}_3+\\
    &\quad\frac{490}{21879}\mcl{A}_4+\frac{49}{2431}\mcl{A}_5+\frac{35}{2431}\mcl{A}_6+\frac{5}{663}\mcl{A}_7+\\
    &\quad\frac{1}{442}\mcl{A}_8
\end{split}
\end{align}
and
\begin{align}\label{eq:cp_star}
\begin{split}
\bm{c}_{\bm{p}} &= 
\frac{1}{112633092}(147807 \Asq{0}+ 144540\AstA{0}{1} +\\
&\quad61908 \AstA{0}{2}+19404 \AstA{0}{3}-6468 \AstA{0}{5}-\\
&\quad6300 \AstA{0}{6}-3636 \AstA{0}{7}-1130 \AstA{0}{8}+\\
&\quad+72732 \Asq{1}+94248 \AstA{1}{2}+38808 \AstA{1}{3}-\\
&\quad17640 \AstA{1}{5}-18648 \AstA{1}{6}-11336 \AstA{1}{7}-\\
&\quad3636 \AstA{1}{8}+40572 \Asq{2}+41160 \AstA{2}{3}-\\
&\quad24696 \AstA{2}{5}-28616 \AstA{2}{6}-18648 \AstA{2}{7}-\\
&\quad6300 \AstA{2}{8}+12348 \Asq{3}-19208 \AstA{3}{5}-\\
&\quad24696 \AstA{3}{6}-17640 \AstA{3}{7}-6468 \AstA{3}{8}+\\
&\quad12348 \Asq{5}+41160 \AstA{5}{6}+d38808 \AstA{5}{7}+\\
&\quad19404 \AstA{5}{8}+40572 \Asq{6}+94248 \AstA{6}{7}+\\
&\quad61908 \AstA{6}{8}+72732 \Asq{7}+144540 \AstA{7}{8}+\\
&\quad147807 \Asq{8})\,.
\end{split}
\end{align}
\end{subequations}

\subsubsection{Standard form}
\noindent As discussed later, to ensure that the interpolation method is fully invariant, the interpolation data needs to be transformed into a standard form.
\begin{definition}[\bfseries Standard form]\label{definition:standard_form}
The $C^4$ spatial hermite data are considered to be in standard form if $\bm{v}_b + \bm{v}_e$ can be expressed as a positive multiple of $\bi$ and $\bm{p}_b=\bm{0}$.
\end{definition}

\subsubsection{The interpolation algorithm}
\noindent Having defined the preliminaries, boundary conditions and data's standard form, the steps to compute the preimage associated with a PH curve that interpolates $C^4$ hermite data are described in Algorithm~\ref{algo:c4}.

\begin{algorithm}[h]
\caption{$C^4$ \textit{hermite data interpolation}:\newline Given interpolation data $\{\bm{p}_b,\bm{p}_e, \bm{v}_b, \bm{v}_e, \bm{a}_b, \bm{a}_e, \bm{j}_b, \bm{j}_e, \bm{s}_b, \bm{s}_e\}$, compute preimage control-points $\{\mcl{A}_{0},\allowbreak\mcl{A}_1, \allowbreak\mcl{A}_2, \allowbreak\mcl{A}_3, \allowbreak\mcl{A}_4, \allowbreak\mcl{A}_5, \allowbreak\mcl{A}_6, \allowbreak\mcl{A}_7, \allowbreak\mcl{A}_8\}$ .}\label{algo:c4}
\begin{algorithmic}[1]
\Function{C4Interpolation}{$\bm{p}_b,\bm{p}_e, \bm{v}_b, \bm{v}_e, \bm{a}_b, \bm{a}_e, \bm{j}_b, \bm{j}_e, \allowbreak\bm{s}_b, \bm{s}_e$}
\State  $\{\bm{p}_{b},\cdots,\bm{s}_{e}\}_{sf}\gets$Transform the data to the standard form in Definition~\ref{definition:standard_form} by a rotation and translation.
\State $\bm{h}_0,\bm{h}_1,\bm{h}_2,\bm{h}_3,\bm{h}_{13},\bm{h}_{14},\bm{h}_{15},\bm{h}_{16}\gets$ Compute hodograph control-points from~\eqref{eq:c4_data}
\State $\mcl{A}_0,\mcl{A}_8 \gets$ Calculate $\mcl{A}_0$ and $\mcl{A}_8$ from eqs.~\eqref{eq:h0} and~\eqref{eq:h16}, which follow Lemma~\ref{lemma:A2}, and thus, introduce free parameters $\theta_0,\theta_8$.
\State $\mcl{A}_1,\mcl{A}_7 \gets$ Calculate $\mcl{A}_1$ and $\mcl{A}_7$ from eqs.~\eqref{eq:h1} and~\eqref{eq:h15}, which follow Lemma~\ref{lemma:AB}, and thus, introduce free parameters $\tau_1,\tau_7$.
\State $\mcl{A}_2,\mcl{A}_6 \gets$ Calculate $\mcl{A}_2$ and $\mcl{A}_6$ from eqs.~\eqref{eq:h2} and~\eqref{eq:h14}, which follow Lemma~\ref{lemma:AB}, and thus, introduce free parameters $\tau_2,\tau_6$.
\State $\mcl{A}_3,\mcl{A}_5 \gets$ Calculate $\mcl{A}_3$ and $\mcl{A}_5$ from eqs.~\eqref{eq:h3} and~\eqref{eq:h13}, which follow Lemma~\ref{lemma:AB}, and thus, introduce free parameters $\tau_3,\tau_5$.
\State $\mcl{A}_4 \gets$Calculate $\mcl{A}_4$ from eq.~\eqref{eq:A4_formula}, which follow Lemma~\ref{lemma:A2}, and thus, introduces a free parameter $\theta_4$.
\State $\bm{h}_{4,\dots,12} \gets$ From eqs.~\eqref{eq:h4}-~\eqref{eq:h12}, compute $\bm{h}_{4,\dots,12}$ and, by setting $\bm{p_b}=\bm{p_0}$ calculate the remaining control-points of $\bm{p}(\xi)$.
\State $\mcl{A}_{0,\cdots,8} \gets$ Return the solution to its original pose, by performing the inverse operation of step 1.  
\State\Return $\mcl{A}_{0},\allowbreak\mcl{A}_1, \allowbreak\mcl{A}_2, \allowbreak\mcl{A}_3, \allowbreak\mcl{A}_4, \allowbreak\mcl{A}_5, \allowbreak\mcl{A}_6, \allowbreak\mcl{A}_7, \allowbreak\mcl{A}_8$
\EndFunction
\end{algorithmic}
\end{algorithm}
\begin{remark}
To be more specific, in line 8 the expression to compute $\mcl{A}_4$ is
\begin{align}
        \begin{split}
        \mcl{A}_4 &= \frac{21879}{490}\left(\sqrt{|\bm{\bm{c}_{\bm{p}}}|}\frac{\frac{\bm{\bm{c}_{\bm{p}}}}{|\bm{\bm{c}_{\bm{p}}}|}+\bm{i}}{|\frac{\bm{\bm{c}_{\bm{p}}}}{|\bm{\bm{c}_{\bm{p}}}|}+\bm{i}|}\mcl{Q}(\theta_4)-(\frac{1}{442}\mcl{A}_{0}+\right.\\
        &\left.\quad\frac{5}{663}\mcl{A}_{1}+\frac{35}{2431}\mcl{A}_{2}+\frac{49}{2431}\mcl{A}_3+\frac{49}{2431}\mcl{A}_5+\right.\\
        &\left.\quad\frac{35}{2431}\mcl{A}_6+\frac{5}{663}\mcl{A}_7+\frac{1}{442}\mcl{A}_8)\right) \,.\\
        \end{split}
\end{align}
\end{remark}
\noindent As a consequence, the preimages $\mcl{A}_\phi(\xi)$, hodographs $\bm{h}_\phi(\xi)$ and path-functions $\bm{p}_\phi(\xi)$ of a PH curve that interpolate the given $C^4$ hermite data relate to a $9$ parameter family $\bm{\phi} = \left[\theta_0,\tau_1,\tau_2,\tau_3,\theta_4,\tau_5,\tau_6,\tau_7,\theta_8\right]$. Similar to C1 and C2 hermite interpolation~\cite{vsir2005spatial, vsir2007}, one of the angular parameters, e.g. $\theta_4$, can be set to zero owing to the non-trivial fibers of the map from the preimage to the hodograph~\eqref{eq:hodograph}.As a result, all PH interpolants can be derived using parameter vectors where $\theta_4 = 0$. Considering this, for the remainder of the paper, we will adopt this specific option and remove $\theta_4$ from the parameter vector $\phi$, leading to $\phi = [\theta_0, \tau_1, \tau_2, \tau_3, \tau_5, \tau_6, \tau_7, \theta_8]$.

\subsection{Choosing the optimal interpolants} %CHANGE WORDING: COPIED FROM ZBYNEK
\noindent In this section we conduct a qualitative analysis of the system of PH curve interpolants $\bm{\phi}$ in order to identify the “best” values of the parameter vector $\bm{\phi}^*$. To fix the free parameters $\bm{\phi}$, we examine the asymptotic behavior of the solutions $\bm{p}_\phi(\xi)$. Specifically, we consider the $C^4$ hermite data extracted from a short portion of an analytic curve and study the asymptotic properties of the solutions as the step size diminishes.

Considering that the curve is represented by its Taylor series, without any loss of generality,
\begin{equation}
\bm{\gamma}(\Xi)= \left(\Xi+\sum_{i=2}^{\infty}\frac{x_i}{i!}\Xi^i,\sum_{i=2}^{\infty}\frac{y_i}{i!}\Xi^i,\sum_{i=2}^{\infty}\frac{z_i}{i!}\Xi^i\right)^\trsp\,,
\end{equation}
where $x_{i},\,y_{i},\,z_{i}$ with $i=2,\dots,\infty$ refer to arbitrary coefficients. For a given step size of $h$, we pick the segment $k$, $\bm{\gamma}(\Xi)=\bm{\gamma}_k(h\xi),\,\xi\in[0,1]$, whose expansion is
\begin{equation}\label{eq:segmented_curve}
\bm{\gamma}_k(\xi) =  \left(\xi h+\sum_{i=2}^{\infty}\frac{x_i}{i!}\xi^i h^i,\sum_{i=2}^{\infty}\frac{y_i}{i!}\xi^i h^i,\sum_{i=2}^{\infty}\frac{z_i}{i!}\xi^i h^i\right)^\trsp\,.
\end{equation}
Now we interpolate the $C^4$ hermite boundary data at the points $\bm{\gamma}_k(0) = \bm{\gamma}(0)$ and $\bm{\gamma}_k(1) = \bm{\gamma}(h)$. The behavior of different PH curves that interpolate the data, based on the size of the interval $h$, is outlined in the theorem below.
\begin{theorem}[\bfseries Approximation error]\label{theorem:approximation_error}
The interpolation error of the PH
\begin{equation}
E = \max_{\xi\in[0,1]} ||\bm{\gamma}_k(\xi)-\bm{p}_\phi (\xi)||
\end{equation}
converges to 0 for $h\rightarrow 0$ if and only if $\tau_1 = \tau_2 = \tau_3 = \tau_5 = \tau_6 = \tau_7 = 0$. Moreover, if $\theta_0 = \theta_4 = \theta_8 = 0$, then $E = \mcl{O}\left(h^{6}\right)$. Otherwise, $E = \mcl{O}\left(h^{1}\right)$.
\end{theorem}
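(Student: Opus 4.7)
The plan is to analyze the asymptotic behavior of Algorithm~\ref{algo:c4} as the step size $h\to 0$, tracking how the free parameters $\bm{\phi}$ propagate through the quaternion solution steps. The structure of the proof follows the ``forward'' order of the algorithm, and the two directions of the iff arise naturally at the steps that invoke Lemma~\ref{lemma:AB}.

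First, I would put the $C^4$ Hermite data extracted from~\eqref{eq:segmented_curve} into standard form and write their Taylor expansions in $h$: once the initial rotation/translation is applied, $\bm{v}_b = h\bi + \mathcal{O}(h^{2})$, $\bm{v}_e = h\bi + \mathcal{O}(h^{2})$, $\bm{a}_b,\bm{a}_e = \mathcal{O}(h^{2})$, $\bm{j}_b,\bm{j}_e = \mathcal{O}(h^{3})$, $\bm{s}_b,\bm{s}_e = \mathcal{O}(h^{4})$, and $\bm{p}_e-\bm{p}_b = \mathcal{O}(h)$. Substituting into~\eqref{eq:c4_data} gives $\bm{h}_0,\bm{h}_{16} = \mathcal{O}(h)$, $\bm{h}_1,\bm{h}_{15} = \mathcal{O}(h)$, and $\bm{h}_2,\bm{h}_3,\bm{h}_{13},\bm{h}_{14} = \mathcal{O}(h)$.

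Second, I would apply Lemma~\ref{lemma:A2} to~\eqref{eq:h0} and~\eqref{eq:h16}, obtaining $\mathcal{A}_0,\mathcal{A}_8 = \mathcal{O}(h^{1/2})$. The standard form guarantees that the ``canonical'' root (rotation $\mathcal{Q}(0) = 1$) lies in the plane spanned by $\{1,\bi\}$, so choosing $\theta_0 = \theta_8 = 0$ keeps the $\bj,\bz$ components of $\mathcal{A}_0,\mathcal{A}_8$ identically zero at leading order, whereas $\theta_0\neq 0$ (or $\theta_8\neq 0$) introduces $\bj,\bz$ components already at order $h^{1/2}$. Third, I would apply Lemma~\ref{lemma:AB} to~\eqref{eq:h1} and~\eqref{eq:h15}. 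Since $|\mathcal{A}_0|^2 = |\bm{h}_0| = \mathcal{O}(h)$ and $\bm{h}_1 = \mathcal{O}(h)$, the parametric formula $\mathcal{X}_{\tau_1} = -(\tau_1+\bm{h}_1)\mathcal{A}_0\bi/|\mathcal{A}_0|^2$ produces $\mathcal{A}_1 = \tau_1\cdot\mathcal{O}(h^{-1/2}) + \mathcal{O}(h^{1/2})$. This is the crucial observation: if $\tau_1\neq 0$, then $\mathcal{A}_1$ diverges as $h\to 0$, and by~\eqref{eq:h_ctrlpts} the middle hodograph coefficients $\bm{h}_{4},\dots,\bm{h}_{12}$ inherit this divergence, so $\bm{p}_\phi(\xi)$ cannot converge to $\bm{\gamma}_k(\xi)$. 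The identical argument for $\tau_7,\tau_2,\tau_6,\tau_3,\tau_5$ (each appearing in a subsequent application of Lemma~\ref{lemma:AB} in lines 5--7) establishes the necessity of $\tau_1=\tau_2=\tau_3=\tau_5=\tau_6=\tau_7=0$ and hence the forward direction of the iff.

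Fourth, assuming all $\tau_i = 0$, I would show by induction along the algorithm that $\mathcal{A}_i = \mathcal{O}(h^{1/2})$ uniformly, so that all $\bm{h}_i$ defined by~\eqref{eq:h_ctrlpts} are $\mathcal{O}(h)$, exactly matching the scaling of $\bm{\gamma}_k'(\xi)$. This settles convergence ($E\to 0$) and yields at least $E = \mathcal{O}(h)$ as the default rate, which handles the ``Otherwise'' clause. To upgrade to $E = \mathcal{O}(h^{6})$ when $\theta_0=\theta_4=\theta_8=0$ as well, I would expand each $\mathcal{A}_i$ as a Taylor series in $\sqrt{h}$ and substitute into~\eqref{eq:h_ctrlpts} and~\eqref{eq:path_func}. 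The $C^4$ Hermite matching enforces agreement of $\bm{p}_\phi$ and $\bm{\gamma}_k$ in position and first four derivatives at both endpoints, which already fixes the Taylor coefficients of $\bm{p}_\phi-\bm{\gamma}_k$ up to order $4$ at each endpoint, hence the error polynomial begins at order $\xi^5(1-\xi)^5$, with coefficients that are polynomial in the free parameters. Setting $\theta_0=\theta_4=\theta_8=0$ kills the leading $h^{1/2},h,\dots,h^{5}$ terms one at a time (by the same ``canonical root'' argument as in step two, iterated through the central products $\mathcal{A}_i\star\mathcal{A}_j$), leaving $E=\mathcal{O}(h^{6})$.

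The hard part will be step four: establishing that $\theta_0=\theta_4=\theta_8=0$ is precisely the choice that raises the order all the way from $\mathcal{O}(h)$ to $\mathcal{O}(h^{6})$, rather than some intermediate rate. This requires tracking how rotations around $\bi$ at the three ``Lemma~\ref{lemma:A2} nodes'' $\mathcal{A}_0,\mathcal{A}_4,\mathcal{A}_8$ perturb every downstream product $\AstA{i}{j}$ in~\eqref{eq:h_ctrlpts}, and verifying that nonzero $\theta_\bullet$ introduces a first-order twist that the $C^4$ Hermite data cannot absorb. The cleanest route is a symbolic expansion, mirroring the $C^2$ analysis of~\cite{vsir2007}, supported by the open-source implementation; a more elegant alternative would exploit the rotation-invariance of the $\star$ product (the commented Lemma on mapping invariance) to argue that the twist introduced by $\theta_\bullet\neq 0$ appears at a fixed positive order in $h$ that cannot be cancelled by the remaining free parameters.
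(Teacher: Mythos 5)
Your proposal follows essentially the same route as the paper: the paper's proof (and the fuller version retained in its source) consists precisely of Taylor-expanding all quantities of Algorithm~\ref{algo:c4} in powers of $\sqrt{h}$ via a computer algebra system, observing that $\mcl{A}_1,\dots,\mcl{A}_7$ acquire terms of order $\tau_i/\sqrt{h}$ or $\tau_i^2/h^{3/2}$ that force all $\tau_i=0$ for convergence, and then checking symbolically that the residual error is $\mcl{O}(h)$ unless $\theta_0=\theta_8=0$, in which case the expansions of $\bm{p}_\phi$ and $\bm{\gamma}_k$ agree through $h^5$. Your identification of the divergent $\tau$-terms at the Lemma~\ref{lemma:AB} steps and your acknowledgment that the $\mcl{O}(h)\to\mcl{O}(h^6)$ upgrade requires a symbolic expansion mirror the paper's argument exactly.
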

\begin{proof}
The proof entails the assessment of the power series for all quantities involved in the interpolation procedure in relation to the step-size $h$, which can be achieved using a suitable computer algebra tool. Due to constraints in available space and the intricate nature of the expressions, we solely present the principal terms of specific quantities, aiming to demonstrate the underlying concept of our approach. For a detailed analysis of the proof, please refer to the Mathematica notebook attached alongside the released code \footnote{\label{fn:theory}Mathematica notebook with the proof of theorem~\ref{theorem:approximation_error}: \url{https://github.com/jonarriza96/phodcos/tree/main/theory}}. 
\end{proof}
\noindent Beyond ensuring the highest approximation order, choosing $\bm{\phi^*}=\bm{0}$ offers other noteworthy advantages, such as maintaining planarity, preserving interpolants, and being reversion invariant, as elaborated in Theorems \ref{theorem:planarity}, \ref{theorem:invariance}, and \ref{theorem:reversed}. These theorems and their corresponding proofs, which can be directly traced back to the $C^2$ predecessor~\cite{vsir2007}, are formally presented in Appendix~\ref{appendix:collorary_theorems}. When combined with Theorem~\ref{theorem:approximation_error}, it becomes clear that the interpolant $\bm{\phi}^*=\bm{0}$ is exceptionally effective for curve approximation. All these attributes collectively are encapsulated in the following statement.
\begin{corollary}[\bfseries Optimal interpolant $\bm{\phi^*}=\bm{0}$]\label{col:c4}
The $C^4$ hermite PH interpolant $\bm{p}_{\phi^*}(\xi)$ has an approximation order of $6$ and, in addition, retains the planarity of the given data, remains invariant under any orthogonal transformations, scaling, and reversion of the input.
\end{corollary}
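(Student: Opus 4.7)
The plan is to observe that Corollary~\ref{col:c4} is not an independent statement but rather a synthesis of Theorem~\ref{theorem:approximation_error} with the three theorems (\ref{theorem:planarity}, \ref{theorem:invariance}, \ref{theorem:reversed}) that are deferred to Appendix~\ref{appendix:collorary_theorems}. So my proof would be structured as a short bookkeeping argument that instantiates each of these results at the specific parameter choice $\bm{\phi}^*=\bm{0}=[\theta_0,\tau_1,\tau_2,\tau_3,\tau_5,\tau_6,\tau_7,\theta_8]=\bm{0}$ (recall $\theta_4$ has already been fixed to $0$ by the fiber argument following Algorithm~\ref{algo:c4}).

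First, for the approximation order, I would directly apply Theorem~\ref{theorem:approximation_error}: its statement gives two conditions. Setting $\tau_1=\tau_2=\tau_3=\tau_5=\tau_6=\tau_7=0$ is necessary and sufficient for $E\to 0$ as $h\to 0$, and in addition setting $\theta_0=\theta_4=\theta_8=0$ upgrades this to $E=\mathcal{O}(h^6)$. Since $\bm{\phi}^*=\bm{0}$ satisfies both conditions simultaneously (with $\theta_4=0$ already built in), we immediately obtain the claimed approximation order of $6$. No further computation is needed here; the heavy lifting lives in the Mathematica notebook referenced in the proof of Theorem~\ref{theorem:approximation_error}.

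Next, for the three geometric properties, I would cite each appendix theorem in turn. Planarity preservation follows from Theorem~\ref{theorem:planarity}: the choice $\bm{\phi}^*=\bm{0}$ causes the out-of-plane components of $\mcl{A}_0,\dots,\mcl{A}_8$ computed in Algorithm~\ref{algo:c4} to vanish whenever the input hermite data lie in a common plane, since every free angle $\theta_i$ is zero (the $\mcl{Q}(\theta_i)$ rotations in Lemma~\ref{lemma:A2} reduce to the identity) and every free translation $\tau_j$ in Lemma~\ref{lemma:AB} is zero. Orthogonal and scaling invariance follow from Theorem~\ref{theorem:invariance}: because Algorithm~\ref{algo:c4} first reduces the data to the standard form of Definition~\ref{definition:standard_form} and then inverts that transformation, the only possible break of invariance would come from a non-trivial choice of $\bm{\phi}$; with $\bm{\phi}^*=\bm{0}$ this source disappears. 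Reversion invariance follows identically from Theorem~\ref{theorem:reversed}, since reversion merely swaps the boundary data $b\leftrightarrow e$ and relabels the $\mcl{A}_i$ symmetrically in the Bernstein basis, and all free parameters are zero under both orderings.

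I do not expect any real obstacle here: the corollary is essentially a labeled conjunction of results proved elsewhere, and the only subtlety is ensuring that $\theta_4=0$ (fixed globally) and the zero choice for the remaining eight parameters are consistent with the hypotheses of each of the four underlying theorems, which they are by construction. The only part that is genuinely substantive, the approximation order $\mathcal{O}(h^6)$, is carried out symbolically in the accompanying Mathematica notebook and is not repeated in the manuscript.
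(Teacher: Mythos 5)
Your proposal is correct and matches the paper's own treatment: the corollary carries no separate proof and is justified exactly as you describe, by combining Theorem~\ref{theorem:approximation_error} (with $\tau_1=\dots=\tau_7=0$ and $\theta_0=\theta_4=\theta_8=0$ giving $E=\mcl{O}(h^6)$) with Theorems~\ref{theorem:planarity}, \ref{theorem:invariance}, and \ref{theorem:reversed} from Appendix~\ref{appendix:collorary_theorems}, all instantiated at $\bm{\phi}^*=\bm{0}$. The extra mechanistic remarks you add (the $\mcl{Q}(\theta_i)$ factors reducing to the identity, the symmetry of the reversed data) go slightly beyond what the paper writes down but are consistent with it.
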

\subsection{Continuity of the preimage and the adapted-frame}\label{subsec:continuity_preimage}
\noindent In the previous subsection, we have shown the benefits of choosing $\bm{\phi}^*=\bm{0}$ for the $C^4$ interpolation algorithm in ~\ref{algo:c4}. Before doing so, as explained at the end of subsection~\ref{subsec:construction_c4}, due to non-trivial fibers between the preimage and the hodograph, we have fixed the parameter $\theta_4=0$. Despite being a numerically convenient choice and not influencing the algorithm's approximation error, it implies that the preimage $\mcl{A}(\xi)$, as well as the adapted-frame $\text{R}(\xi)$, of two successive segments differ by a \emph{roll-angle offset}, and thus, %only their derivatives are continuous:
\begin{subequations}\label{eq:frame_cont}
\begin{align*}
    %\mcl{A}_k(1) &\neq \mcl{A}_{k+1}(0)\quad\text{and}\quad\mcl{A}^{(','',''')}_k(1) = \mcl{A}^{(','',''')}_{k+1}(0)\,,\\
    %\text{R}_k(1) &\neq \text{R}_{k+1}(0)\quad\text{and}\quad\text{R}^{(','',''')}_k(1) = \text{R}^{(','',''')}_{k+1}(0)\,,\\
    \mcl{A}_k(1) \neq \mcl{A}_{k+1}(0)\quad\text{and}\quad\text{R}_k(1) \neq \text{R}_{k+1}(0)\,,
\end{align*}
where $k$ and $k+1$ refer to the successive segments. To recover the continuity, we perform a \emph{roll-rotation} $\alpha$ that removes the existing offset. This can be achieved by a two stage procedure: First, we calculate a quaternion that cancels the offset, 
    \begin{equation}
        \mcl{A}_{\alpha} = \delta_\alpha(\text{R}_k,\text{R}_{k+1})\,,
    \end{equation}
where $\delta_\alpha\,:\,\{\sothree,\sothree\}\rightarrow \sothree $ computes the $\alpha$ roll difference between two rotation matrices and outputs the respective quaternion. Second, we rotate all the preimage control-points accordingly
\begin{equation}\label{eq:rotated_preimage_ctrlpts}
    \mcl{A}_{0,\cdots,8} = \mcl{A}_{\alpha}\,\mcl{A}_{0,\cdots,8}\,.
\end{equation}
Reconstructing the PH curve with the rotated preimage control-points in~\eqref{eq:rotated_preimage_ctrlpts} results in a parametric path that, besides having the appealing mathematical attributes summarized in Corollary~\ref{col:c4}, it is also offset-free, and thereby continuous, in its preimage $\mcl{A}(\xi)$ and adapted-frame $\text{R}(\xi)$.
\end{subequations}
%rotate the preimage $\mcl{A}_k(\xi)$ around the first component of the adapted-frame $\text{R}_{\bm{e}_1}(\xi)$

%When doing so, to achieve the desired $C^4$ continuity in $\bm{p}_{\phi}$ and inheriting the appealing mathematical properties summarized in Corollary~\ref{col:c4}, we have assumed 

%\inblue{To ensure continuity in the the adapted-frame we implement a "roll" (with respect to $e_1$) that ensures continuity along the other components $e_2$ and $e_3$. When doing so, to avoid discontinuities in the preimage, we flip the sign of the preimage's for all successive segments with opposite directions.  
%This does not alter the approximation error, because ...}
%The continuity achieved after this process can be visualized in Fig. XXX .
\begin{figure*}[!ht]
\centering
\includegraphics[width=0.91\textwidth]{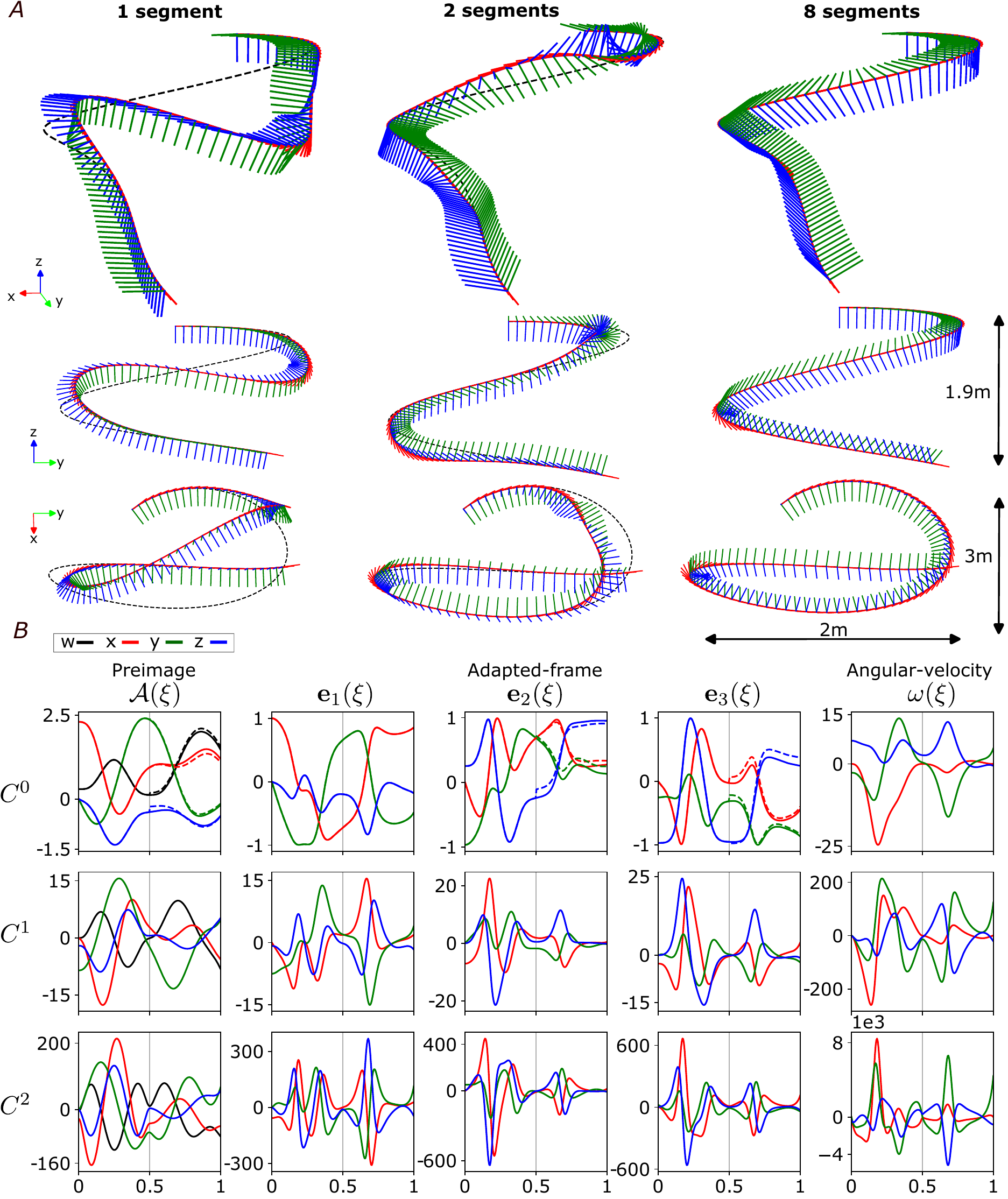}
\caption{\textit{A)} Comparison of the parametric paths resulting from the conversion of an exemplary analytical curve~\eqref{eq:exemplary_curve} by means of PHODCOS -- the algorithm~\ref{algo:papor} presented in this work --  for 1, 2 and 8 segments. The original analytical curve is given by the dashed black line, while the converted parametric path is depicted by its adapted frame, with the first, second and third components shown in red, green and blue, respectively. The computed parametric paths are divided into columns, each of which displays the results associated with a specific number of segments. In each column, the top row provides an isometric view, while the middle and bottom rows offer side and top views, respectively. The conversion error and improvement ratios associated to every number of segments are given in Table~\ref{tab:benchmark}. As stated in theorem~\ref{theorem:approximation_error}, the conversion error decreases as the number of segments augments, resulting in a greater similarity between the parametric path and the original curve. \textit{B)}. Continuity analysis in the preimage $\mcl{A}(\xi)$ (first column), adapted frame $\text{R}(\xi)$ (second, third and fourth columns) and angular-velocity $\bm{\omega}(\xi)$ (fifth column) for the \emph{2 segments} case. The transition between the first and second segments is given by the gray vertical line at $\xi=0.5$. The first row depicts the functions themselves, while the second and third relate to the first and second derivatives. The dashed lines intend to expose the loss on continuity in the parametric functions if the angle offset cancellation~\eqref{eq:frame_cont} is not conducted.}\label{fig:benchmark}%with respect to path-parameter $\xi$}
\vspace{-5mm}
\end{figure*}
\subsection{Overview of the method}
\begin{algorithm}[h]
\small
%Given $\bm{x}(t)$, $\mcl{P}_{1,\cdots,m}$, find $\bm{x^*}(t+T), \bm{u^*}(t+T)$
\caption{\textit{Pythagorean Hodograph-based Differentiable Coordinate System} (\textbf{PHODCOS}):\newline Given an analytical curve $\bm{\gamma}(\xi)$ and an admissible error tolerance $\epsilon$, convert it into a parametric-path whose Cartesian coordinates $\bm{p}(\xi)$, adapted-frame $\text{R}(\xi)$, angular velocity $\bm{\omega}(\xi)$, are given by -- at least -- $C^2$ functions.}\label{algo:papor}%\inorange{and other geometric features, such as arc-length $L(\xi)$, curvature $\kappa(\xi)$ and torsion $\tau(\xi)$}
\begin{algorithmic}[1]
\Function{PHODCOS}{$\bm{\gamma}(\xi)$,\,$\epsilon$}
\State $e = \infty\,,\,n_s = 2$
\While{$e\geq\epsilon$}
\State $\bm{\gamma}_0,...\,,\bm{\gamma}_{n_s} \gets \textsc{DivideIntoSegments}(\bm{\gamma},n_s)$
\For{$k\in\{1,\,...,n_s\}$}
\State $\bm{p}_b,\cdots,\bm{s}_e \gets \textsc{GetInterpolationData}(\bm{\gamma}_k)$
\State $\mcl{A}^k_0,\cdots,\mcl{A}^k_8 \gets \textsc{C4hermite}(\bm{p}_b,\cdots,\bm{s}_e)$~\ref{algo:c4}
\State $\mcl{A}^k_0,\cdots,\mcl{A}^k_8 \gets \textsc{ERFConti}(\mcl{A}^k_0,\cdots,\mcl{A}^k_8)$~\eqref{eq:frame_cont}
\EndFor
\State $\bm{p},\text{R}, \bm{\omega}, \cdots \gets \textsc{ConstructCurve}(\mcl{A})$\eqref{eq:polynomial_funcs}\eqref{eq:erf}\eqref{eq:angvel}
\State $e \gets \textsc{CalculateConversionError}(\bm{\gamma},\bm{p})$
\State $n_s \gets n_s+1$
\EndWhile
\State\Return $\mcl{A}_{0,\cdots,8},\,\bm{p}(\xi),\text{R}(\xi), \bm{\omega}(\xi),L(\xi),\kappa(\xi),\tau(\xi),\cdots$
\EndFunction
\end{algorithmic}
\end{algorithm}
\noindent The pseudocode for PHODCOS -- the presented path parameterization scheme -- is summarized in Algorithm~\ref{algo:papor}. Provided an analytical curve $\bm{\gamma}(\xi)$ and a permissible error tolerance $\epsilon$, PHODCOS converts it into a parametric path, whose Cartesian coordinates $\bm{p}(\xi)$, adapted-frame $\text{R}(\xi)$, angular velocity $\bm{\omega}(\xi)$ are described by -- at least -- $C^2$ functions.  Additionally, parametric functions expressing other geometric characteristics such as arc length $L(\xi)$, curvature $\kappa(\xi)$, and torsion $\tau(\xi)$ can also be computed in closed-form. Due to the approximation error of order $h^6$, proven in theorem~\ref{theorem:approximation_error}, the algorithm iteratively runs the $C^4$ hermite interpolation method in~\ref{algo:c4} by augmenting the amount of segments $n_s$ until the approximation error $e$ is below the desired tolerance $\epsilon$. Notice that the presented iterative scheme to find the minimum amount of segments $n_s$ that fulfills the tolerance $\epsilon$ is exemplary and could be further tailored to the user needs, either by starting with a better initial guess -- higher $n_s$ -- or by modifying the update strategy at every iteration.
\begin{figure*}[!h]
\centering
\includegraphics[width=\textwidth]{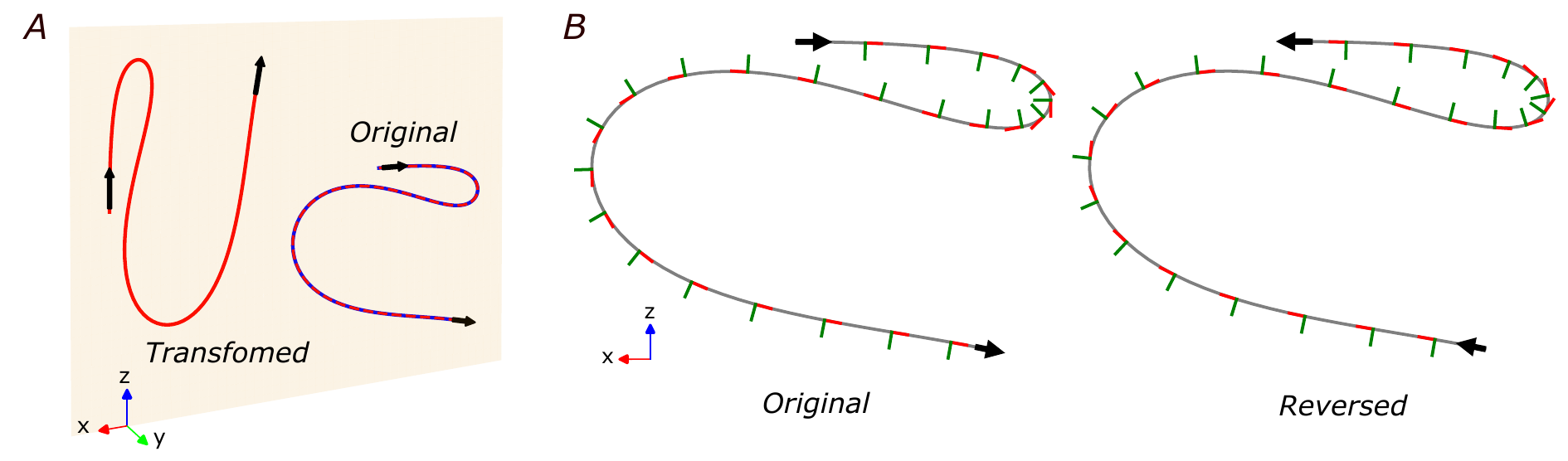}
\caption{Numerical validation for the planarity, invariance and reversion theorems~\ref{theorem:planarity}, ~\ref{theorem:invariance} and~\ref{theorem:reversed}. The original curve is a planar variant of~\eqref{eq:exemplary_curve}, whose $y$ component is set to $0$. In both figures the interpolation data is represented by black arrows. \textit{A)} Within the $xz$ plane colored in yellow, the \emph{original curve} is given in blue, the \emph{transformed curve} in red and the curve resulting from rotating and translating back the transformed curve in dashed-red. This curve aligns with the original one, and thereby, numerically validates theorem~\ref{theorem:invariance}. Additionally, in accordance with the planarity theorem~\ref{theorem:planarity}, all curves remain in the same plane as the interpolation data. \textit{B)} Planar view of the original and reversed curves, alongside their adapted frames, where the first and second components are colored in red and green, respectively. As stated by the reversion theorem~\ref{theorem:reversed}, the curves obtained for the optimal interpolants $\bm{\phi}^*=0$ are equivalent and can only be distinguished by the direction of their adapted-frames.}\label{fig:theorems}
%\vspace{-5mm}
\end{figure*}
\subsection{Numerical analysis}
\noindent The method described in the preceding subsection enables the transformation of any analytical curve into a PH curve. To showcase the properties of this algorithm, we perform a numerical analysis consisting of three parts. Firstly, we concentrate on an exemplary curve to replicate the approximation error as proven in theorem~\ref{theorem:approximation_error}. Secondly, we examine the continuity of the resulting parametric functions. Lastly, we investigate the planarity, reversion, and invariance properties outlined in Theorems~\ref{theorem:planarity}, \ref{theorem:invariance}, and \ref{theorem:reversed}.

\vspace{1mm}
\subsubsection*{Approximation error}
\hspace{1cm}

\vspace{1mm}
\noindent To numerically replicate the approximation order obtained in theorem~\ref{theorem:approximation_error}, as an illustrative example we select the same curve as in~\cite{vsir2005spatial,vsir2007}:
\begin{equation}\label{eq:exemplary_curve}
    \bm{\lambda}(\xi) = \left[1.5\sin(7.2\xi), \cos(9\xi), e^{\cos(1.8\xi)}\right]\,,
\end{equation}
for $\xi\in \left[0,1\right]$. We split this interval into $2^n$ uniform segments, i.e., $h=\frac{1}{2^n}$. By means of the $C^4$ interpolation algorithm~\ref{algo:c4}, each of them is converted into a PH curve. When doing so, aiming for the same approximation order as in theorem~\ref{theorem:approximation_error}, we select $\bm{\phi}^*=\bm{0}$. If the error between the original analytical curve and the converted PH curve is not sufficiently small, we continue with further subdivision by increasing $n$. This implies that the error is expected to converge to $0$ as $\mcl{O}(h^6) = \mcl{O}(\frac{1}{64^n})$.  Please note that achieving the desired convergence in the improvement ratio necessitates performing computations with high precision. In order to accomplish this, we utilize the advanced precision capabilities offered by Mathematica and an accompanying notebook containing the relevant calculations is included along with the released code\footref{fn:theory}.

The maximum distance between the original curve and the converted path -- denoted as \emph{conversion error} -- for different divisions, alongside the respective improvement ratios are given in Table~\ref{tab:benchmark}. As shown in the third and sixth columns, the improvement ratio converges to $64$. Besides that, in Fig.~\ref{fig:benchmark}-A, a comparison of the parametric paths resulting from the conversion with $1$, $2$ and $8$ segments, intuitively depicts how a higher number of segments correlates to a better approximation error.

\begin{table}[t]
    \setlength{\tabcolsep}{6pt} % Adjust for narrower spacing
    \renewcommand{\arraystretch}{1.25} % Keep the row height
    \centering
    \caption{Maximum distance error and improvement ratio between the original curve and the parametric path obtained from PHODCOS. The error decreases by a ratio of 64, as stated in theorem~\ref{theorem:approximation_error}. Paths for 1, 2, and 8 segments are shown in Fig.~\ref{fig:benchmark}.} \label{tab:benchmark}
    \begin{tabular}{|c|c|c|}
        \hline
        \# Segments & Error [m] & Ratio \\
        \hline
        1 & 1.2569 & - \\
        2 & 0.5447 & 2.307 \\
        4 & 0.0332 & 16.405 \\
        8 & $\expnumber{16.080}{-4}$ & 20.646 \\
        16 & $\expnumber{24.455}{-6}$ & 63.173 \\
        32 & $\expnumber{1.897}{-7}$ & 134.145 \\
        64 & $\expnumber{5.009}{-9}$ & 37.878 \\
        128 & $\expnumber{8.009}{-11}$ & 62.545 \\
        256 & $\expnumber{1.272}{-12}$ & 62.986 \\
        512 & $\expnumber{1.982}{-14}$ & 64.161 \\
        1024 & $\expnumber{3.105}{-16}$ & 63.822 \\
        2048 & $\expnumber{4.856}{-18}$ & 63.955 \\
        4096 & $\expnumber{7.588}{-20}$ & 63.989 \\
        8192 & $\expnumber{1.186}{-23}$ & 63.997 \\
        \hline
    \end{tabular}
\end{table}

\vspace{1mm}
\subsubsection*{Continuity analysis}
\hspace{1cm}

\vspace{1mm}
\begin{figure*}[!h]
\centering
\includegraphics[width=\textwidth]{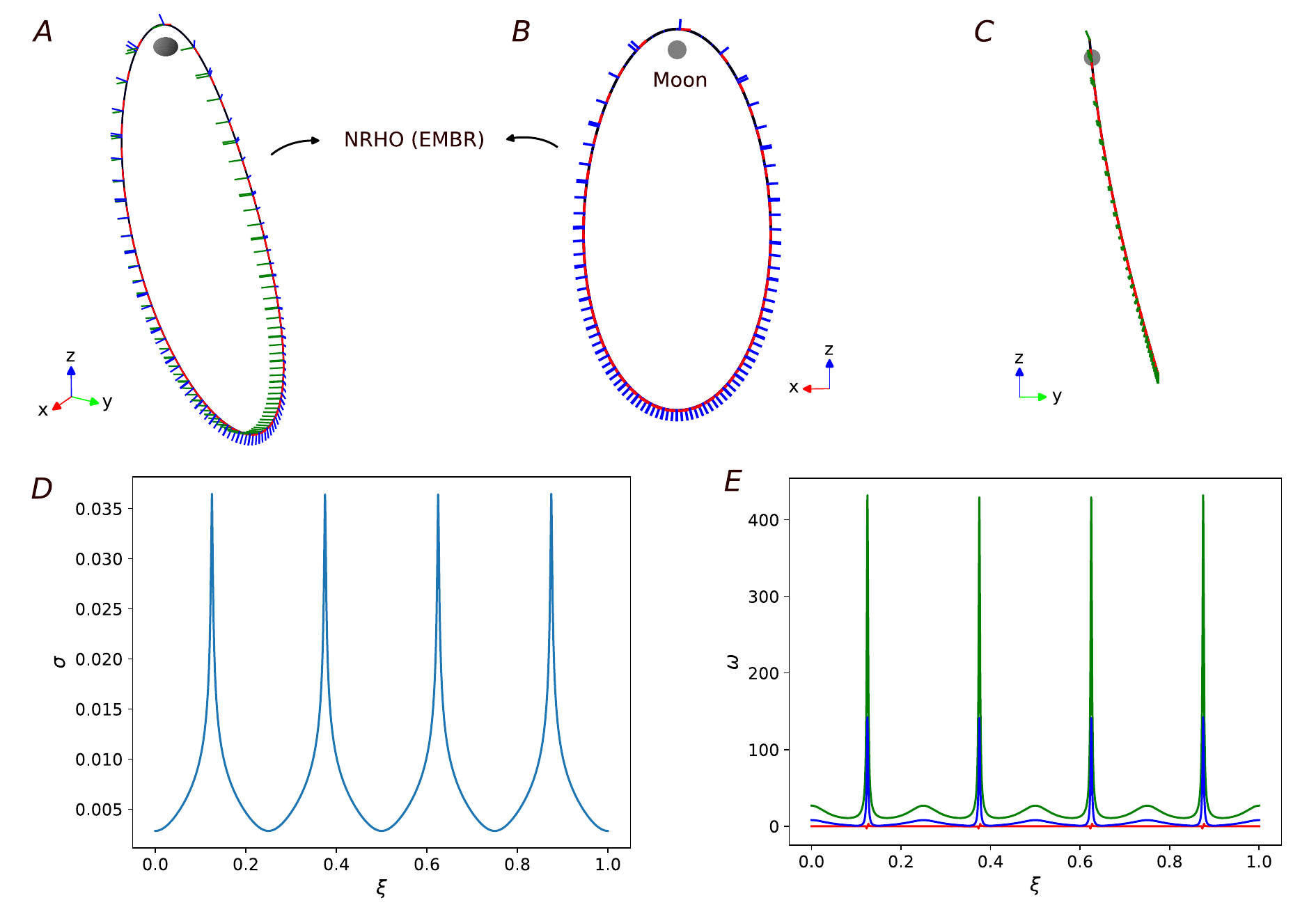}
\caption{The PHODCOS coordinate system is applied to the Near Rectilinear Halo Orbit (NRHO) within the Earth-Moon Barycentric Rotating (EMBR) reference frame. This system is illustrated from three perspectives: A) isometric view, B) front view, and C) side view. The vector components corresponding to the coordinate system are represented in red, blue, and green, respectively. The parametric speed $\sigma$ and the angular velocity $\bm{\omega}$ associated with the coordinate system are depicted in panels D) and E). For an alternative view of the coordinate system from the Earth-Centered Inertial (ECI) frame, please refer to Fig.~\ref{fig:nrho_cover}.}\label{fig:nrho}
%\vspace{-5mm}  
\end{figure*}

\noindent In order to showcase that the parametric functions resulting from PHODCOS achieve the desired continuity of $C^2$, we analyze their continuity up to the second derivative. For this purpose we focus on the case with two segments, namely the middle column in Fig.~\ref{fig:benchmark}-A. Despite its high approximation error, this case allows us to clearly observe the transition between the first and second segments. 

The preimage $\mcl{A}(\xi)$ and the parametric functions for the adapted frame $\text{R}(\xi)$ and its angular velocity $\bm{\omega}(\xi)$, alongside their first two derivatives, are given in Fig.~\ref{fig:benchmark}-B. Notice that we do not analyze $\bm{p}(\xi)$ since this is apriori known to be $C^4$ from the underlying interpolation algorithm~\ref{algo:c4}. The transition between the first and second segment occurs at $\xi=0.5$ and is depicted by a grey line. As these plots suggest, all parametric functions are at least $C^2$. More specifically, following the theoretical explanations in~\ref{subsec:piecewise_ph_c2},  the angular velocity $\bm{\omega}(\xi)$ is the parametric function with the lowest degree of continuity of $C^2$. This can be observed from the plot at the bottom right side of Fig.~\ref{fig:benchmark}-B, where further derivation would result in discontinuities at the transition $\xi=0.5$. This is not the case for the preimage $\mcl{A}(\xi)$ and the adapted frame $\text{R}(\xi)$, which are $C^3$, and therefore, still account for an additional continuous derivative.

Besides that, we also study the consequences of omitting the rotation -- explained in subsection~\ref{subsec:continuity_preimage} -- that ensures continuity in the preimage $\mcl{A}(\xi)$ and the adapted frame $\text{R}(\xi)$ by removing the roll-offset between two successive segments. The resultant parametric functions are depicted by dashed lines in the first row of Fig.~\ref{fig:benchmark}-B and, for clarity, we do not show the higher derivatives. From the obtained results, it becomes apparent that without the angle-offset cancellation, despite having a $C^4$ parametric path function $\bm{p}(\xi)$, continuity in the preimage $\mcl{A}(\xi)$ and the adapted frame $\text{R}(\xi)$ is lost, thereby rendering the offset cancellation in subsection~\ref{subsec:continuity_preimage} indispensable.

\subsubsection*{Planarity, reversion and invariance analysis}

\vspace{1mm}
\noindent In this subsection we numerically validate the remaining theorems~\ref{theorem:planarity}, ~\ref{theorem:invariance}, and~\ref{theorem:reversed}. For this purpose, we reuse the exemplary curve~\eqref{eq:exemplary_curve} from the previous subsections. However, to validate the planarity preservation in theorem~\ref{theorem:planarity}, we set the $y$ component to $0$. Using the planar projection, we separately study the invariance to special orthogonal motions in theorem~\ref{theorem:invariance} and reversion in theorem~\ref{theorem:reversed}. In both cases, we consider the original curve~\eqref{eq:exemplary_curve} for a single segment. 

The first study relates to Fig.~\ref{fig:theorems}-A, where we manipulate the hermite data of the \emph{original curve} (depicted in blue) by employing an arbitrary special orthogonal motion, specifically a translation of $[-1,0,2.5]$ and a rotation of $\pi/2$ radians around the $y$ axis. The interpolation of this data yields a \emph{transformed curve} (given in red). According to theorem~\ref{theorem:invariance}, it is affirmed that by translating and rotating the transformed curve back, an equivalent curve to the original one is obtained. This concept is visually depicted in Figure~\ref{fig:theorems}-A, where the curve resulting from translating and rotating back the transformed curve (shown in red dashed line) aligns with the original curve. Another notable outcome from this study is that in line with the planarity theorem~\ref{theorem:planarity}, both the original and the transformed curves remain confined within the same $xz$-plane (colored yellow) as the interpolation data (represented by the black arrows).

The second study focuses on the reversion invariance property stated in theorem~\ref{theorem:reversed}. Using the same hermite data as before, we show how the optimal interpolants $\bm{\phi}^*=0$ provide the exact same curve for the reversed data. This can be visualized in Fig.~\ref{fig:theorems}-B, where the original and reversed curves only differ by the direction of the adapted-frame.

\begin{figure*}[!h]
\centering
\includegraphics[width=\textwidth]{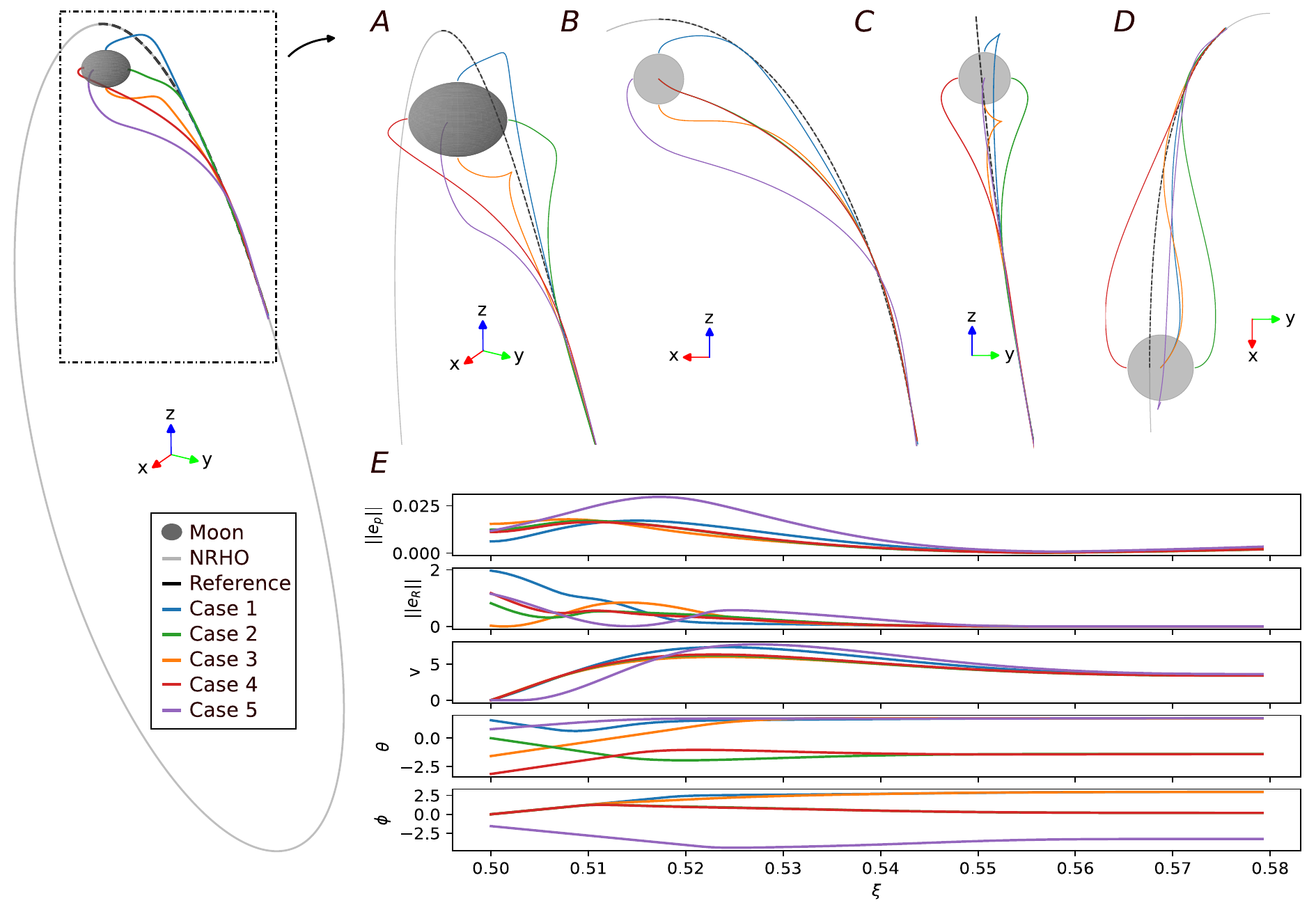}
\caption{Ascent maneuvers that take the lunar lander from five different positions at the Moon's surface back to the NRHO. The respective motions have been computed by a prediction-based controller whose dynamics evolve according to the PHODCOS coordinate system. The Moon and NRHO are given in gray, while the five lunar lander trajectories are colored blue, green, orange, red and purple. The left column provides a broad overview of the converging maneuvers towards the orbit, while panels A), B), C), and D) present more detailed isometric, front, side, and top views respectively. Panel E) illustrates the position and orientation errors between the lunar lander and the NRHO (rows 1 and 2), its velocity magnitude (row 3) and gimbal angles (rows 4 and 5), color coded according to the different initial conditions.}\label{fig:navigation}
% \vspace{-5mm}
\end{figure*}
%%%%%%%%%%%%%%%%%%%%%%%%%%%%%%%%%%%%%%
\section{Tutorial Example: Lunar Gateway}\label{sec:example}
%%%%%%%%%%%%%%%%%%%%%%%%%%%%%%%%%%%%%%
To illustrate the applicability of PHODCOS, we focus on the Near Rectilinear Halo Orbit (NRHO), a highly nonlinear geometric reference selected for the placement of the Lunar Gateway, a space station for the Artemis program aimed at establishing a permanent human presence on the Moon~\cite{whitley2018earth}. Unlike existing relative spacecraft maneuvering models that require the orbit to be circular or elliptical, PHODCOS does not impose constraints on shape or size, making it applicable to any arbitrary curve~\cite{sullivan2017comprehensive,willis2023convex}. %In contrast to existing relative orbital maneuvering models, which require the orbit to be circular or elliptical, PHODCOS does not take any shape or size assumptions and thus is applicable to any arbitrary curve~\cite{sullivan2017comprehensive,willis2023convex}. 

We demonstrate the efficacy of PHODCOS by dividing the analysis into two phases. First, we use Algorithm~\ref{algo:papor} to parameterize the NRHO in both the Earth-Centered Inertial (ECI) and Earth-Moon Barycentric Rotating (EMBR) reference frames. Following that, we showcase how PHODCOS allows for the formulation of trajectory optimization algorithms, by developing a prediction-based controller for getting from the lunar surface back to orbit.

\subsection{Parameterizing the NRHO}
%We retrieve the data associated with the NRHO from a publicly available dataset\footnote{NRHO data available in \url{https://ssd.jpl.nasa.gov/tools/periodic_orbits.html}. The specific settings of the selected orbit are as follows: Earth-Moon (system), Halo (orbit family), L2 (liberation point), 6-8 days (period range), 95 (id).}. This data is fitted to a spline of order 4, which yields two different parametric curves $\gamma$, one represented in ECI and another in EMBR. To assign the PHODCOS coordinate system to both curves, we apply algorithm~\ref{algo:papor} with $n_s$=256 segments. 
We obtain the NRHO data from a publicly available dataset\footnote{NRHO data available at \url{https://ssd.jpl.nasa.gov/tools/periodic_orbits.html}. The specific settings for the selected orbit are as follows: Earth-Moon system, Halo orbit family, L2 libration point, 6-8 day period range, ID 95.}. This data is fitted to a fourth-order spline, resulting in two parametric curves $\bm{\gamma}$: one in the Earth-Centered Inertial (ECI) frame and the other in the Earth-Moon Barycentric Rotating (EMBR) frame. To assign the PHODCOS coordinate system to both curves, we apply Algorithm~\ref{algo:papor} using $n_s = 256$ segments.

The resultant coordinate systems are depicted in Fig.~\ref{fig:nrho_cover} (ECI) and Fig.~\ref{fig:nrho} (EMBR). These figures confirm key characteristics of the NRHO: its high orbital eccentricity brings it very close to the lunar north pole, while it deviates significantly when passing over the south pole. This occurs approximately four times for each lunar orbit, consistent with the 9:2 orbital resonance, and results in four highly nonlinear transitions per orbital period. These transitions are clearly reflected in the parametric speed $\sigma$ and the angular velocity $\bm{\omega}$ of the coordinate system, as shown in panels D and E of Fig.~\ref{fig:nrho}. Despite these abrupt transitions, as discussed in Section~\ref{sec:phodcos}, PHODCOS ensures that all parametric functions maintain at least $C^2$ continuity, thus making them suitable for gradient-based algorithms, as will be demonstrated in the next subsection.

%These plots confirm the attributes that characterize the NRHO: Its high orbital eccentricity makes it approach very closely to the lunar north pole and it deviates when going over the south pole. This happens approximately 4 times each time the moon orbits the moon, as expected by its orbital resonance of 9:2 and implies that the orbit shows 4 highly nonlinear transitions per period. This is clearly reflected in the parametric speed $\sigma$ and angular velocity $\bm{\omega}$ of the coordinate system, as shown in 
\subsection{Predictive control for ascent of a lunar lander}
To exemplify how PHODCOS can be leveraged in the design of optimization- or learning-based algorithms, we formulate a prediction-based control scheme based on the parameterization of the NRHO carried out in the previous subsection. More specifically, we design a trajectory optimization program that takes a lunar lander from the Moon's surface back to orbit.

In this model, the lunar lander is approximated as a gimbaled rocket with states defined as $\bm{x} = \left[\bm{p}, v, \theta, \phi\right]\in \mbb{R}^6$, where $\bm{p}\in \mbb{R}^3$ represents the position vector, and $\{v, \theta, \phi\} \in \mbb{R}$ denote the velocity magnitude and gimbal angles, respectively. The system's control inputs are the acceleration magnitude $a \in \mathbb{R}$ and the gimbal's angular rates $\{\omega_\theta, \omega_\phi\} \in \mbb{R}$, collectively forming the control vector $\bm{u} = \left[a, \omega_\theta, \omega_\phi\right]\in \mbb{R}^3$. Consequently, the motions of the lunar lander are decribed by the following equations of motion:
\begin{subequations} \label{eq:rocket}
\begin{align}
	 \bm{\dot{p}}&=v \left[\cos{\phi}\cos{\theta},\,\sin{\phi},\,\cos{\phi}\sin{\theta}\right]\\
	 \dot{v}&= a,\quad \dot{\theta} = \omega_\theta, \quad \dot{\phi} = \omega_\phi\,,
\end{align}
\end{subequations}
which can be written in the common form of nonlinear dynamical system as $\bm{\dot{x}} = f(\bm{x},\bm{u})$. By designating the PHODCOS coordinate system to the NRHO --as done in the previous subsection--, we can opt for a spatial parameterization, whereby the system dynamics expressed in~\eqref{eq:rocket} evolve according to the path parameter $\xi$, instead of the conventional time-based representation. To make this distinction clear and following the notation in Section~\ref{sec:preliminaries}, we rewrite the equations of motion as $\bm{{x}}^{'}(\xi) = f(\bm{x}(\xi),\bm{u}(\xi))$, where $(\cdot)' = \dv{(\cdot)}{\xi}$. Hence, rather than having a temporal horizon $T$, we have a spatial look-ahead $\Xi=\xi_f-\xi_0$. This implies that our prediction horizon pertains to a specific segment of the orbit, regardless of time.

Throughout this prediction interval, our objective is to find a trajectory to ascend from a given state $\bm{x_0}$ on the lunar surface back to the NRHO, while ensuring that the motion is dynamically compliant with~\eqref{eq:rocket} and adheres to all state and input limits, i.e., $\bm{x}\in\mathcal{X}$, $\bm{u}\in\mathcal{U}$. To compute such a trajectory, we solve the subsequent Optimal Control Problem (OCP):%Within this prediction window, our goal is to find a trajectory to ascend from a given state $\bm{x_0}$ on the lunar surface back to the NRHO, while being dynamically feasible according to~\eqref{eq:rocket} and ensuring the integrity of state and input constraints, i.e., $\bm{x}\in\mathcal{X}$,  $\bm{u}\in\mathcal{U}$. To compute such a trajectory, we solve the following Optimal Control Problem (OCP):
\begin{subequations}\label{eq:OCP}
    \begin{flalign}
     &\min_{\bm{x}(\cdot),\bm{u}(\cdot)} \int_{\xi_0}^{\xi_f}||\bm{p}(\xi) - \bm{p}_\text{NRHO}(\xi)||^2_Q + ||\bm{u}(\xi)||^2_R\; d\xi&
    \end{flalign}
    \vspace{-5mm}
	\begin{alignat}{3}
	\text{s.t.}\quad& \bm{x}(\xi_0) = \bm{x_0}\,,\\
	&\bm{x}'(\xi) =f(\bm{x}(\xi),\bm{u}(\xi)), &\quad&\xi \in \left[\xi_0,\xi_f\right]\label{eq:dynamic_const}\\
	&\bm{x}(\xi)\in\mcl{X}\,,\,\bm{u}(\xi)\in\mcl{U}\,,    &\quad&\xi \in \left[\xi_0,\xi_f\right]\,.
 \label{eq:mpc_spatial_cond_cont}
	\end{alignat}
\end{subequations}
where $\bm{p}_\text{NRHO}(\xi)$ is readily available from the parametric functions obtained in the previous subsection by applying the PHODCOS Algorithm~\ref{algo:papor} to the NRHO.
%%Consequently, rather than having a temporal horizon $T$, we adopt a spatial scope $\Xi=\xi_f-\xi_0$. Essentially, this suggests that our forecast horizon pertains to a specific segment of the orbit and does not depend on time.
% \begin{equation}
% \bm{{x}}^{'}(\xi) = f(\bm{x}(\xi),\bm{u}(\xi))\,,
% \end{equation}
% where $(\cdot)' = \dv{(\cdot)}{\xi}$.

%Given that our goal is to ascend from the lunar surface back to the orbit, we will refer to the NRHO as the reference, and the parametric functions obtained in the previous subsection by applying the PHODCOS algorithm are denoted as $\{\bm{p}_\text{ref},\text{R}_\text{ref},\bm{\omega}_\text{ref}\}$.

OCP~\eqref{eq:OCP} presents a basic example, as it only begins to explore the possibilities offered by parameterizing trajectory optimization problems using curvilinear differentiable coordinate systems like PHODCOS.  This formulation can be extended to handle more complex behaviors, such as explicit control over the progress variable, enhancing robustness against disturbances, shaping the transverse distance to the orbit, or allocating protective funnels around the orbit~\cite{arrizabalaga2022spatial, arrizabalaga2023pose, arrizabalaga2024differentiable}. For a comprehensive discussion of trajectory optimization strategies and applications beyond navigation, see \cite{arrizabalaga2024universal}. Nevertheless, to maintain a clear and precise perspective, OCP~\eqref{eq:OCP} represents the simplest formulation, adequately showing that PHODCOS can be integrated into optimization- and learning-based algorithms, which require differentiating over parametric functions $\{\bm{p}_\text{NRHO}(\xi),\text{R}_\text{NRHO}(\xi),\bm{\omega}_\text{NRHO}(\xi)\}$ by retrieving first- and second-order derivative information.

% OCP~\eqref{eq:OCP} is very simple, since it barely scratches the surface of opportunities that arise when parameterizing a trajectory optimization problem with curvilinear differentiable coordinate systems such as PHODCOS. This formulation can be extended to achieve more complex behaviors, such as having explicit control over the progress variable, inciting a desired behavior over the transverse distance to the orbit, having increased robustness in the face of disturbances or allocating funnels around the orbit~\cite{arrizabalaga2022spatial, arrizabalaga2023pose, arrizabalaga2024differentiable}. For a complete overview of possible trajectory optimization formulations, as well as other applications that go beyond navigation problems, please refer to \cite{arrizabalaga2024universal}. However, aiming to provide a concise and focused view of the problem, OCP~\eqref{eq:OCP} is the most simple formulation that is sufficient to demonstrate that PHODCOS can be embedded in optimization and learning algorithms, which require differentiating over the parametric functions $\{\bm{p}_\text{NRHO},\text{R}_\text{NRHO},\bm{\omega}_\text{NRHO}\}$ by retrieving first- and second-order derivative information.

We approximate OCP \eqref{eq:OCP} as a nonlinear program according to the multiple-shooting method \cite{bock1984multiple} in which the optimization horizon $\Xi=0.08$ is split into $N=100$ segments, each with constant decision variables. To ensure the validity of the method, we consider five distinct initial locations on the lunar surface: the north and south poles, both extremes along the equator, and the region directly opposite the NRHO horizon segment.

The resulting trajectories, along with the Moon and the NRHO, are shown in Fig.~\ref{fig:navigation}. The segment of the orbit that falls under the prediction horizon and serves as a reference $\bm{p}_\text{NRHO}(\xi)$ is depicted by the dashed black line. The motions of the lunar lander are given in the left column of the figure, as well as in panels A), B), C) and D). These are colored according to the initial conditions, i.e., blue and orange for the north and south poles, green and red for the extremes along the equator, and purple for the region opposite to the horizon segment. To gain a better understanding of the convergence of these trajectories, panel E) shows the position $\bm{e}_p$ and orientation $\bm{e}_R$ errors associated to all five trajectories, as well as their respective velocity magnitudes $v$ and gimbal angles $\theta, \phi$. From here, it is apparent that all five cases succeed in reaching the orbit within the prediction horizon.

% \subsection{Differentiable funnels for state representation}
% The coordinate system allows to allocate a \emph{funnel} around it, similar to~\cite{arrizabalaga2024differentiable}.

%%%%%%%%%%%%%%%%%%%%%%%%%%%%%%%%%%%%%%
\section{Conclusions}\label{sec:conclusions}
%%%%%%%%%%%%%%%%%%%%%%%%%%%%%%%%%%%%%%
In this work, we proposed PHODCOS, an algorithm to assign a differentiable coordinate system to any given curve. For this purpose, we rely on a Pythagorean Hodograph (PH) curve-based hermite interpolation scheme that parameterizes any arbitrary curve into a geometric reference with a moving frame associated with it. The conversion algorithm is characterized by three attractive properties: efficiency, bounded conversion error, and differentiability. First, all operations are provided in closed form, eliminating the need for iterative or numerical procedures. This makes PHODCOS computationally lightweight and suitable for real-time applications. Second, we have proved analytically and numerically that the algorithm achieves a sixth-order approximation, ensuring that the deviation between the coordinate system and the curve remains within a predefined tolerance. Third, the parametric functions produced by PHODCOS are at least two times differentiable, making them compatible with gradient-based numerical routines. To demonstrate the practical utility of PHODCOS, we have applied it to the Near-Rectilinear Halo Orbit (NRHO) and formulated a trajectory optimization problem to guide a lunar lander from the Moon's surface back to the NRHO. To facilitate broader use in science and engineering, we have made PHODCOS available as an \href{https://github.com/jonarriza96/phodcos}{open source} software library, along with the examples presented.

%%%%%%%%%%%%%%%%%%%%%%%%%%%%%%%%%%%%%%%%%%%%%%%%%%%%%%%%%%%%%%%%%%%%%%%%%%%%%%%%%%%%%%%%%%%%%%%%%

\appendices{}              % note there is no {} to put a title. Each appendix has its own title
%%%%%%%%%%%%%%%%%%%%%%%%%%%%%%%%%%%%%%%%%%%%%%%%%%%%%%%%%%%%%%%%%%%%%%%%%%%%%%%%%%%%%%%%%%%%%%%%%
% For a single appendix, use the \appendix{} keyword and do not use the \section command.

\section{Derivation of preimage control-point $\mcl{A}_4$}
%%%%%%%%%%%%%%%%%%%%%%%%%%
\begin{align}\label{eq:cp}
\begin{split}
\bm{c}_{\bm{p}} &= 
\frac{1}{112633092}(147807 \AiA{0}{0}+72270 \AiA{0}{1}+\\
&\quad30954 \AiA{0}{2}+9702 \AiA{0}{3}-3234 \AiA{0}{5}-\\
&\quad3150 \AiA{0}{6}-1818 \AiA{0}{7}-565 \AiA{0}{8}+\\
&\quad722{7}{0}\AiA{1}{0}+72732 \AiA{1}{1}+47124 \AiA{1}{2}+\\
&\quad19404 \AiA{1}{3}-8820 \AiA{1}{5}-9324 \AiA{1}{6}-\\
&\quad5668 \AiA{1}{7}-1818 \AiA{1}{8}+30954 \AiA{2}{0}+\\
&\quad47124\AiA{2}{1}+40572 \AiA{2}{2}+20580 \AiA{2}{3}-\\
&\quad12348 \AiA{2}{5}-14308 \AiA{2}{6}-9324 \AiA{2}{7}-\\
&\quad3150 \AiA{2}{8}+9702 \AiA{3}{0}+19404 \AiA{3}{1}+\\
&\quad20580\AiA{3}{2}+12348 \AiA{3}{3}-9604 \AiA{3}{5}-\\
&\quad12348 \AiA{3}{6}-8820 \AiA{3}{7}-3234 \AiA{3}{8}-\\
&\quad3234 \AiA{5}{0}-8820 \AiA{5}{1}-12348 \AiA{5}{2}-\\
&\quad9604\AiA{5}{3}+12348 \AiA{5}{5}+20580 \AiA{5}{6}+\\
&\quad19404 \AiA{5}{7}+9702 \AiA{5}{8}-3150 \AiA{6}{0}-\\
&\quad9324 \AiA{6}{1}-14308 \AiA{6}{2}-12348 \AiA{6}{3}+\\
&\quad20580\AiA{6}{5}+40572 \AiA{6}{6}+47124 \AiA{6}{7}+\\
&\quad30954 \AiA{6}{8}-1818 \AiA{7}{0}-5668 \AiA{7}{1}-\\
&\quad9324 \AiA{7}{2}-8820 \AiA{7}{3}+19404 \AiA{7}{5}+\\
&\quad47124\AiA{7}{6}+72732 \AiA{7}{7}+72270 \AiA{7}{8}-\\
&\quad565 \AiA{8}{0}-1818 \AiA{8}{1}-3150 \AiA{8}{2}-\\
&\quad3234 \AiA{8}{3}+9702 \AiA{8}{5}+30954 \AiA{8}{6}+\\
&\quad72270\AiA{8}{7}+147807 \AiA{8}{8})\,.
\end{split}
\end{align}
Using the commutative operation declared in Definition~\ref{definition:comm_operation}, eq.~\eqref{eq:cp} can be simplified into~\eqref{eq:cp_star}.

\subsection{More Comments}
Use section and subsection keywords as usual.

\section{Theorems for $\mathbf{\phi^*}=\mathbf{0}$}\label{appendix:collorary_theorems}
\begin{theorem}[\bfseries Planarity preservation]\label{theorem:planarity}
Given $C^4$ hermite data confined to a plane $P \subset \mbb{R}^3$, the interpolant $\bm{p_{\phi^*}}$ remains within $P$.
\end{theorem}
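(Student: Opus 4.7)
The plan is to use Theorem~\ref{theorem:invariance} to reduce to a canonical configuration and then to show that every preimage control-point produced by Algorithm~\ref{algo:c4} with $\bm{\phi}^*=\bm{0}$ lies in the real two-dimensional subspace $V:=\mbb{R}\bi+\mbb{R}\bj$ of the quaternions. This will automatically force the hodograph $\bm{h}(\xi)$ and the path function $\bm{p}_{\phi^*}(\xi)$ to remain in the $\bi\bj$-plane, which is the image of $P$ under the reduction.

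I would first pick the rigid motion of Theorem~\ref{theorem:invariance} so that $P$ is sent to the $\bi\bj$-plane and, simultaneously, the data sits in the standard form of Definition~\ref{definition:standard_form}. This is possible because $\bm{v}_b+\bm{v}_e\in P$: after rotating $P$ onto the $\bi\bj$-plane, a further rotation about the $\bz$-axis (which preserves that plane) aligns $\bm{v}_b+\bm{v}_e$ with $+\bi$, and a translation sets $\bm{p}_b=\bm{0}$. Every interpolation vector then lies in $V$.

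Next I would record three algebraic facts about $V$: \emph{(i)} if $\mcl{A},\mcl{B}\in V$, a direct expansion of Definition~\ref{definition:comm_operation} gives $\mcl{A}\star\mcl{B}\in V$; \emph{(ii)} in Lemma~\ref{lemma:AB}, if $\bm{a},\mcl{B}\in V$, the solution $\mcl{X}_\tau$ lies in $V$ exactly when $\tau=0$ -- a nonzero real $\tau$ would contribute a $\bz$-part, since $\mcl{B}\bi\in\mbb{R}+\mbb{R}\bz$; \emph{(iii)} in Lemma~\ref{lemma:A2}, if $\bm{a}\in V$, the solution $\mcl{X}_\phi$ lies in $V$ exactly when $\phi=0$, because the factor $\mcl{Q}(\phi)=\cos\phi+\bi\sin\phi$ otherwise produces real and $\bz$ components when multiplied into an element of $V$.

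I would then walk through Algorithm~\ref{algo:c4} with $\bm{\phi}^*=\bm{0}$. The control-points $\bm{h}_0,\bm{h}_1,\bm{h}_2,\bm{h}_3$ and $\bm{h}_{13},\ldots,\bm{h}_{16}$ obtained from~\eqref{eq:c4_data} are real linear combinations of planar data, hence in $V$. Applying (iii) with $\theta_0=\theta_8=0$ yields $\mcl{A}_0,\mcl{A}_8\in V$, and successive invocations of (ii) with $\tau_1=\tau_7=\tau_2=\tau_6=\tau_3=\tau_5=0$ -- using (i) at every step to confirm that the right-hand sides of~\eqref{eq:h1}--\eqref{eq:h3} and~\eqref{eq:h13}--\eqref{eq:h15}, after substituting the already-placed control-points, still lie in $V$ -- place $\mcl{A}_1,\mcl{A}_2,\mcl{A}_3,\mcl{A}_5,\mcl{A}_6,\mcl{A}_7$ in $V$. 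In~\eqref{eq:A4_formula}, the quantity $\bm{p}_e-\bm{p}_b$ lies in $V$ and $\bm{c}_{\bm{p}}$ lies in $V$ by (i), so (iii) with $\theta_4=0$ forces $\mcl{A}_4\in V$. Consequently every $\bm{h}_i\in V$ via~\eqref{eq:h_ctrlpts}, every $\bm{p}_i\in V$ via~\eqref{eq:p_ctrlpts}, and the Bernstein combination~\eqref{eq:path_func} keeps $\bm{p}_{\phi^*}(\xi)\in V$ for all $\xi$; reverting the rigid motion returns the interpolant to $P$. The main obstacle is the bookkeeping in this inductive chain, because each invocation of Lemma~\ref{lemma:AB} entangles the new unknown with the earlier control-points through $\star$; one has to re-verify with (i) at every stage that the datum feeding the lemma still lies in $V$ before applying the $\tau=0$ choice. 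Once (i)--(iii) are in place, however, each step reduces to a routine verification.
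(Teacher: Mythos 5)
Your argument is correct, and it is worth noting that the paper itself offers no argument at all here: its entire proof is the citation ``See Theorem 3.11 in~\cite{vsir2007}'', which establishes planarity for the $C^2$ interpolant by precisely the mechanism you describe. Concretely, you reduce to a standard-form configuration in which the data plane is the $\bi\bj$-plane and then show that the two-dimensional subspace $V=\mbb{R}\bi+\mbb{R}\bj$ is closed under the $\star$ product and under the $\tau=0$ and $\phi=0$ branches of Lemmas~\ref{lemma:AB} and~\ref{lemma:A2}, so that every preimage control point, hence every hodograph and path control point, stays in the plane; your facts (i)--(iii) all check out by direct quaternion multiplication. What you add beyond the citation is the extension of this induction to the longer $C^4$ chain $\mcl{A}_0,\mcl{A}_8\rightarrow\mcl{A}_1,\mcl{A}_7\rightarrow\cdots\rightarrow\mcl{A}_4$, including the central step through $\bm{c}_{\bm{p}}$ and eq.~\eqref{eq:A4_formula}, which the paper leaves implicit. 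One point deserves to be made fully explicit: Definition~\ref{definition:standard_form} pins down only $\bm{v}_b+\bm{v}_e\parallel\bi$ and $\bm{p}_b=\bm{0}$, leaving a residual rotation about the $\bi$-axis free, so the standard-form transformation actually chosen inside Algorithm~\ref{algo:c4} need not be the one mapping $P$ onto the $\bi\bj$-plane. Your appeal to Theorem~\ref{theorem:invariance} (invariance of the $\bm{\phi}$-parameterization under rotations, in particular about the $\bi$-axis) is exactly what closes this gap, since it guarantees the output curve is independent of which admissible standard-form rotation is used; it would strengthen the write-up to say so explicitly rather than folding it into the outer reduction.
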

\begin{proof}
See Theorem 3.11 in~\cite{vsir2007}.
\end{proof}
\begin{theorem}[\bfseries Invariance of interpolants]\label{theorem:invariance}
The parameterization of solutions, as described in Definition~\ref{definition:standard_form}, remains unchanged under rigid body motions, i.e., special orthogonal transformations). However, reflections invert the signs of all parameters, meaning $\bm{\phi}\rightarrow-\bm{\phi}$. Hence, $\bm{\phi}^*=\bm{0}$ exhibits invariance under all orthogonal transformations.
\end{theorem}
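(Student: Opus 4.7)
The plan is to decouple the behaviour of Algorithm~\ref{algo:c4} from the rigid-body part of the data, and then track the residual action of the orthogonal group through the quaternion mappings $m_\theta(\mcl{A})=\mcl{Q}(\theta)\mcl{A}\mcl{Q}(-\theta)$ and $m_{\bz}(\mcl{A})=\bz\mcl{A}\bz$ implicit in Section~\ref{sec:preliminaries}. First, I would observe that line~2 of Algorithm~\ref{algo:c4} pre-composes the raw data with a special orthogonal map and a translation, both of which are inverted in line~11. Any rigid body motion applied to the input is therefore absorbed verbatim by these pre/post transformations and cannot affect the intermediate computation, i.e.\ the numerical values of $(\theta_0,\tau_1,\tau_2,\tau_3,\tau_5,\tau_6,\tau_7,\theta_8)$ assigned in lines~4--8 are the same as in the untransformed case. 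This gives invariance under $\sothree$.

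Second, to handle reflections, the above reduction shows it is enough to analyse the action of $m_{\bz}$ on data that is already in standard form. Since $m_{\bz}$ fixes $\bi$ and negates the $\bz$-component of any pure vector quaternion, it preserves Definition~\ref{definition:standard_form}. The key ingredient is the compatibility of $m_{\bz}$ with the commutative product $\star$ (the $\star$-invariance lemma in~\cite{vsir2007}), which allows Lemmas~\ref{lemma:AB} and~\ref{lemma:A2} to be transported through $m_{\bz}$ with an accompanying $\tau\mapsto-\tau$ and $\phi\mapsto-\phi$ on the parameter of the solution family. Applying this lemma-by-lemma through the eight parameter introductions in lines~4--8 of Algorithm~\ref{algo:c4} yields $\bm{\phi}\mapsto-\bm{\phi}$ for the transformed data.

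Finally, the statement about $\bm{\phi}^{*}=\bm{0}$ is immediate: every orthogonal map factors through an element of $\sothree$ followed by at most one reflection, so the induced action on $\bm{\phi}$ is either the identity or negation; the unique joint fixed point is $\bm{\phi}^{*}=-\bm{\phi}^{*}=\bm{0}$.

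The main obstacle I anticipate is the inductive bookkeeping in the second step. The assignments for $\mcl{A}_1,\ldots,\mcl{A}_7$ and for $\mcl{A}_4$ are nested — each call to Lemma~\ref{lemma:AB} or Lemma~\ref{lemma:A2} feeds on previously-fixed control points — so one has to verify cleanly that the sign flip on the new free parameter is not corrupted by the already-transformed control-point inputs. This reduces to checking that $m_{\bz}$ commutes with the specific arithmetic appearing in the hodograph control-point formulas~\eqref{eq:h_ctrlpts} and in $\bm{c}_{\bm{p}}$ in~\eqref{eq:cp_star}, which is precisely why the $\star$-invariance property is indispensable.
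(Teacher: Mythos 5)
Your proposal is correct and follows essentially the same route as the paper, which simply defers to Lemma~3.6 of~\cite{vsir2007}: reduce to the standard form of Definition~\ref{definition:standard_form} so that only a residual rotation about $\bi$ and the reflection $m_{\bz}$ survive, then transport Lemmas~\ref{lemma:AB} and~\ref{lemma:A2} through these maps via the $\star$-invariance property to obtain $\bm{\phi}\mapsto\bm{\phi}$ and $\bm{\phi}\mapsto-\bm{\phi}$ respectively. The inductive bookkeeping you flag is exactly the content of that cited lemma (extended here from the $C^2$ to the $C^4$ setting), so no new idea is missing.
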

\begin{proof}
\begin{comment}
Given that the conversion algorithm~\ref{algo:c4} requires putting the interpolation data in the standard form, the proof reduces to noticing that after applying any orthogonal transformation $m$ to the hermite data -- $\tilde{\bm{p}}_b,\cdots,\tilde{\bm{p}}_e$ -- for any $\phi$:
\begin{enumerate}
    \item If $m_{\theta}$ is a rotation about the $\bi$-axis, then $\tilde{\bm{p}}_{\phi}(\xi) = m_{\theta}(\bm{p}_\phi(\xi))$.
    \item If $m_{\bz}$ is a reflection with respect to the $\bi\bj$ plane, then $\tilde{\bm{p}}_{\phi}(\xi) = m_{\bz}(\bm{p}_{-\phi}(\xi))$.
\end{enumerate}
The mathematical procedure that proves these properties is given in Lemma 3.6 in~\cite{vsir2007}.
\end{comment}
See Lemma 3.6 in~\cite{vsir2007}
\end{proof}
\begin{theorem}[\bfseries Reversion invariance]\label{theorem:reversed}
Define $\bm{p}_\phi(\xi)$ as the interpolants of the dataset $\{\bm{p}_b$, $\bm{p}_e$, $\bm{v}_b$, $\bm{v}_e$, $\bm{a}_b$, $\bm{a}_e$, $\bm{j}_b$, $\bm{j}_e$, $\bm{s}_b$, $\bm{s}_e\}$. Similarly, let $\bar{\bm{p}}_\phi(\xi)$ represent the interpolants of the corresponding "reversed" dataset $\{\bar{\bm{p}}_b = \bm{p}_e$, $\bar{\bm{v}}_b = \bm{v}_e$, $\bar{\bm{a}}_b = \bm{a}_e$, $\bar{\bm{j}}_b = \bm{j}_e$, $\bar{\bm{s}}_b = \bm{s}_e\}$ and $\{\bar{\bm{p}}_e = \bm{p}_b$, $\bar{\bm{v}}_e = \bm{v}_b$, $\bar{\bm{a}}_e = \bm{a}_b$, $\bar{\bm{j}}_e = \bm{j}_b$, $\bar{\bm{s}}_e = \bm{s}_b\}$. Consequently, we have that $\bar{\bm{p}}_{\bm{\phi}}(1-\xi) = \bm{p}_{-\phi}(\xi)$, leading to $\bm{p}_{\bm{\phi^*}} = \bar{\bm{p}}_{\bm{\phi^*}}$.
\end{theorem}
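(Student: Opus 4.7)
The plan is to exploit the combinatorial symmetry of the Bernstein--Bezier basis under the parameter change $\xi\mapsto 1-\xi$ together with an algebraic symmetry of the quaternion hodograph map $\mcl{A}\mapsto\mcl{A}\bi\mcl{A}^*$, and to push both through Algorithm~\ref{algo:c4}. The argument will mirror the $C^2$ reversion result in~\cite{vsir2007} and only extend the bookkeeping to the higher-order Hermite data.

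First I would translate the reversal at the level of the hodograph. Because $B_i^{2n}(1-\xi)=B_{2n-i}^{2n}(\xi)$ and because odd-order derivatives of $\bm{p}(1-\xi)$ flip sign, the reversed dataset is equivalent to the hodograph of the curve traced in the opposite direction, and in Bernstein form this reads $\bar{\bm{h}}_i=-\bm{h}_{16-i}$. Second, I would lift this to the preimage. The key identity is $\bj\,\bi\,\bj^*=-\bi$, which implies that the substitution $\bar{\mcl{A}}(\xi):=\mcl{A}(1-\xi)\,\bj$ yields
\begin{equation*}
\bar{\mcl{A}}(\xi)\,\bi\,\bar{\mcl{A}}^*(\xi) = \mcl{A}(1-\xi)\,\bj\,\bi\,\bj^*\,\mcl{A}^*(1-\xi) = -\bm{h}(1-\xi),
\end{equation*}
i.e.\ exactly the hodograph of the reversed curve; at the control-point level this reads $\bar{\mcl{A}}_i=\mcl{A}_{8-i}\,\bj$. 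This identifies the canonical candidate preimage for the reversed Hermite data.

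Third, I would march step-by-step through Algorithm~\ref{algo:c4} and check that each step commutes with the above transformation up to a sign flip of the free parameters. At the Lemma~\ref{lemma:A2} steps (producing $\mcl{A}_0,\mcl{A}_8,\mcl{A}_4$), the elementary identity $\bj\,\mcl{Q}(\phi)=\mcl{Q}(-\phi)\,\bj$ converts $\mcl{Q}(\theta_0)$ into $\mcl{Q}(-\theta_8)\,\bj$ after the index swap, giving $\theta_0\leftrightarrow -\theta_8$ and $\theta_4\leftrightarrow -\theta_4$. At the Lemma~\ref{lemma:AB} steps (producing $\mcl{A}_1,\mcl{A}_2,\mcl{A}_3$ and their mirrors), the real parameter $\tau$ enters linearly in the single $(\tau+\bm{a})$ factor, so the same right-multiplication by $\bj$ combined with the index reflection $i\leftrightarrow 8-i$ yields $\tau_i\leftrightarrow -\tau_{8-i}$. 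Assembling these identities establishes $\bar{\bm{p}}_{\bm{\phi}}(1-\xi)=\bm{p}_{-\bm{\phi}}(\xi)$. Specialising at $\bm{\phi}=\bm{\phi}^*=\bm{0}$ we have $-\bm{\phi}^*=\bm{\phi}^*$, hence $\bar{\bm{p}}_{\bm{\phi}^*}(1-\xi)=\bm{p}_{\bm{\phi}^*}(\xi)$, i.e.\ the two interpolants trace the same geometric curve, which is the claim $\bm{p}_{\bm{\phi}^*}=\bar{\bm{p}}_{\bm{\phi}^*}$.

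The main obstacle I anticipate is the bookkeeping in the third step: the hodograph formulas~\eqref{eq:h_ctrlpts} couple several preimage control points through the commutative product $\star$, and one must check that right-multiplication by $\bj$ together with the reflection $i\leftrightarrow 8-i$ propagates consistently through every one of these nonlinear recursions (and, in particular, through the formula for $\mcl{A}_4$ that involves nearly all other preimages). Conceptually this is nothing new beyond the $C^2$ case in~\cite{vsir2007}, but the longer chain of recursive definitions for the $C^4$ interpolant makes the verification substantially more tedious, which is why I would defer most of it to symbolic computation along the lines of the Mathematica notebook used in Theorem~\ref{theorem:approximation_error}.
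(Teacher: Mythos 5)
Your argument is essentially the paper's: the paper proves this theorem purely by citation to Theorem 3.10 of~\cite{vsir2007}, and your reconstruction (the Bernstein symmetry $B_i^{2n}(1-\xi)=B_{2n-i}^{2n}(\xi)$, the preimage substitution $\bar{\mcl{A}}(\xi)=\mcl{A}(1-\xi)\,\bj$ via $\bj\,\bi\,\bj^*=-\bi$, and the parameter tracking $\theta_0\leftrightarrow-\theta_8$, $\tau_i\leftrightarrow-\tau_{8-i}$ through Algorithm~\ref{algo:c4}) is precisely the extension of that proof to the $C^4$ setting. The only caveat is one you already implicitly handle: for $\bar{\bm{p}}_{\bm{\phi}}(1-\xi)=\bm{p}_{-\bm{\phi}}(\xi)$ to hold, the reversed dataset must flip the sign of the odd-order derivatives ($\bar{\bm{v}}_b=-\bm{v}_e$, $\bar{\bm{j}}_b=-\bm{j}_e$, etc.), which the paper's statement omits but the cited $C^2$ theorem includes.
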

\begin{proof}
See Theorem 3.10 in~\cite{vsir2007}.%\inblue{To do}.
\end{proof}
\section{Derivation of hodograph control-points}\label{appendix:h_ctrlpts}
Focusing the specific case of $C^4$ interpolation and a preimage $\mcl{A}(\xi)$ of order 8, the extended Bernstein Bezier form in~\eqref{eq:hodograph} is
\begin{align}
\begin{split}
    \mcl{A}(\xi) &= \xi^8\mcl{A}_8+8(1-\xi)\xi^7\mcl{A}_7+28(1-\xi)^2\xi^6\mcl{A}_6 +\\
    &\quad56(1-\xi)^3\xi^5\mcl{A}_5 + 70(1-\xi)^4\xi^4\mcl{A}_4+\\
    &\quad56(1-\xi)^5\xi^3\mcl{A}_3+28(1-\xi)^6\xi^2\mcl{A}_2+\\
    &\quad8(1-\xi)^7\xi\mcl{A}_1+(1-\xi)^8\mcl{A}_0\,.
\end{split}
\end{align}
Leveraging the quadratic expression in~\eqref{eq:hodograph}, the control-points of the hodograph $\bm{h}(\xi)$ are defined as:
\begin{subequations}\label{eq:h_ctrlpts_original}
\allowdisplaybreaks
\begin{align}
\bm{h}_0 &= \AiA{0}{0}\,,\\
\bm{h}_1 &= \frac{1}{16}(8\AiA{0}{1}+ 8\AiA{1}{0})\,,\\
\bm{h}_2 &= \frac{1}{120}(28\AiA{0}{2}+64\AiA{1}{1}+28\AiA{2}{0})\,,\\
\begin{split}
\bm{h}_3 &= \frac{1}{560}(56\AiA{0}{3}+224\AiA{1}{2}+224\AiA{2}{1}+\\
         &\quad56\AiA{3}{0})\,,
\end{split}\\
\begin{split}
\bm{h}_4 &= \frac{1}{1820}(70\AiA{0}{4}+448\AiA{1}{3}+784\AiA{2}{2}+\\
         &\quad448\AiA{3}{1}+70\AiA{4}{0})\,,
\end{split}\\
\begin{split}
\bm{h}_5 &=\frac{1}{4368}(56\AiA{0}{5}+560\AiA{1}{4}+1568\AiA{2}{3}+\\
            &\quad1568\AiA{3}{2}+560\AiA{4}{1}+ 56\AiA{5}{0})\,,
\end{split}\\
\begin{split}
\bm{h}_6 &= \frac{1}{8008}(28\AiA{0}{6}+448\AiA{1}{5}+1960\AiA{2}{4}+\\
            &\quad3136\AiA{3}{3}+1960\AiA{4}{2}+448\AiA{5}{1}+\\
            &\quad28\AiA{6}{0})\,,
\end{split}\\
\begin{split}
\bm{h}_7 &= \frac{1}{11440}(8\AiA{0}{7}+224\AiA{1}{6}+1568\AiA{2}{5}+\\
            &\quad3920\AiA{3}{4}+3920\AiA{4}{3}+1568\AiA{5}{2}+\\
            &\quad224\AiA{6}{1}+8\AiA{7}{0})\,,
\end{split}\\
\begin{split}
\bm{h}_8 &= \frac{1}{12870}(\AiA{0}{8}+64\AiA{1}{7}+784\AiA{2}{6}+\\
            &\quad3136\AiA{3}{5}+4900\AiA{4}{4}+3136\AiA{5}{3}+\\
            &\quad784\AiA{6}{2}+64\AiA{7}{1}+\AiA{8}{0})\,,
\end{split}\\
\begin{split}
\bm{h}_9 &= \frac{1}{11440}(8\AiA{1}{8}+224\AiA{2}{7}+1568\AiA{3}{6}+\\
            &\quad3920\AiA{4}{5}+3920\AiA{5}{4}+1568\AiA{6}{3}+\\
            &\quad224\AiA{7}{2}+8\AiA{8}{1})\,,
\end{split}\\
\begin{split}
\bm{h}_{10} &= \frac{1}{8008}(28\AiA{2}{8}+448\AiA{3}{7}+1960\AiA{4}{6}+\\
            &\quad 3136\AiA{5}{5}+1960\AiA{6}{4}+448\AiA{7}{3}+\\
            &\quad28\AiA{8}{2})\,,
\end{split}\\
\begin{split}
\bm{h}_{11} &= \frac{1}{4368}(56\AiA{3}{8}+560\AiA{4}{7}+1568\AiA{5}{6}+\\
            &\quad 1568\AiA{6}{5}+560\AiA{7}{4}+56\AiA{8}{3})\,,
\end{split}\\
\begin{split}
\bm{h}_{12} &= \frac{1}{1820}(70\AiA{4}{8}+448\AiA{5}{7}+784\AiA{6}{6}+\\
            &\quad 448\AiA{7}{5}+70\AiA{8}{4})\,,
\end{split}\\
\bm{h}_{13} &= \frac{1}{560}(56\AiA{5}{8}+224\AiA{6}{7}+224\AiA{7}{6}+\\
            &\quad56\AiA{8}{5})\,,\\
\bm{h}_{14} &= \frac{1}{120}(28\AiA{6}{8}+64\AiA{7}{7}+28\AiA{8}{6})\,,\\
\bm{h}_{15} &= \frac{1}{16}(8\AiA{7}{8}+8\AiA{8}{7})\,,\\
\bm{h}_{16} &= \AiA{8}{8}\,.
\end{align}
\end{subequations}
Using the commutative operation declared in Definition~\ref{definition:comm_operation}, these equations can be simplified into~\eqref{eq:h_ctrlpts}.
%%%%%%%%%%%%%%%%%%%%%%%%%%%%%%
This is the second appendix.

%%%%%%%%%%%%%%%%%%%%%%%%%%%%%%%%%%%%%%%%%%%%%%%%%%%%%%%%%%%%%%%%%%%%%%%%%%%%%%%%%%%%%%%%%%%%%%%%%%%%%%
% \acknowledgments
% The authors thank the Office of Naval Research for funding this project.

%%%%%%%%%%%%%%%%%%%%%%%%%%%%%%%%%%%%%%%%%%%%%%%%%%%%%%%%%%%%%%%%%%%%%%%%%%%%%%%%%%%%%%%%%%%%%%%%%%%%%%
\bibliographystyle{IEEEtran}
\bibliography{lib}
% \begin{thebibliography}{1}

% \bibitem{ITAR}
% U.S. Munitions List, Sections 38 and 47(7) of the Arms Export Control Act (22 U.S.C 2778 and 2794(7).

% \bibitem{AeroConf}
% Aerospace Conference Web site: \underline{www.aeroconf.org}.

% \end{thebibliography}

%%%%%%%%%%%%%%%%%%%%%%%%%%%%%%%%%%%%%%%%%%%%%%%%%%%%%%%%%%%%%%%%%%%%%%%%%%%%%%%%%%%%%%%%%%%%%%%%%%%%%%
\thebiography
\begin{biographywithpic}
{Jon Arrizabalaga}{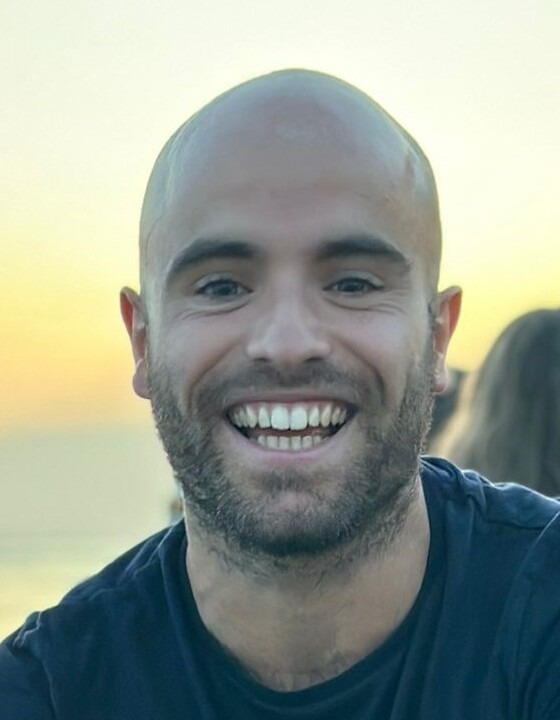}
is pursuing his PhD at the Munich Institute of Robotics and Machine Intelligence, Technical University of Munich, and is a visiting researcher at the Robotics Institute of Carnegie Mellon University. He received a BSc. in mechanical engineering from the University of Navarre (Spain, 2018) and an MSc. in mechatronics and robotics at KTH Royal Institute of Technology (Sweden, 2020). His research focuses on the convergence of optimization, control, and geometry, particularly in their applications to robotics and autonomous systems.
\end{biographywithpic}
\begin{biographywithpic}
{Fausto Vega}{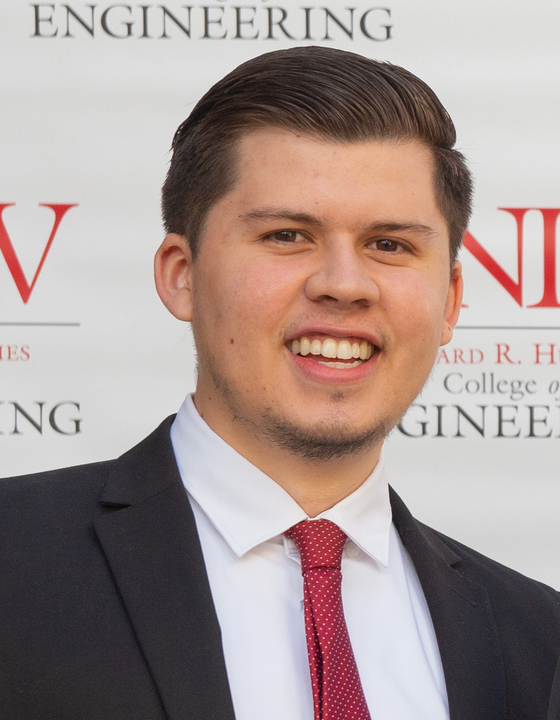}
is a PhD student at the Robotics Institute at Carnegie Mellon University (CMU). He received a BS in mechanical engineering from the University of Nevada, Las Vegas and an MS in Robotics at CMU. His research interests include optimal control for maneuver design small spacecraft state estimation.
\end{biographywithpic}
\begin{biographywithpic}
{Zbyněk Šír}{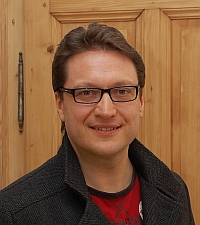}
is an associate professor at the Mathematical Institute in the Faculty of Mathematics and Physics of Charles University in Prague. He received a DEA (Diplome d’études approfondies) from University Pierre et Marie Curie, a Master with specelization in Riemannian Geometry and Theory of Representations, and a PhD in mathematics with specialization in history of French geometry, both from Charles University of Prague. His research interestes include CAGD, theoretical differential geometry, history of geometry and other applied geometric fields.
\end{biographywithpic}
\begin{biographywithpic}
{Zachary Manchester}{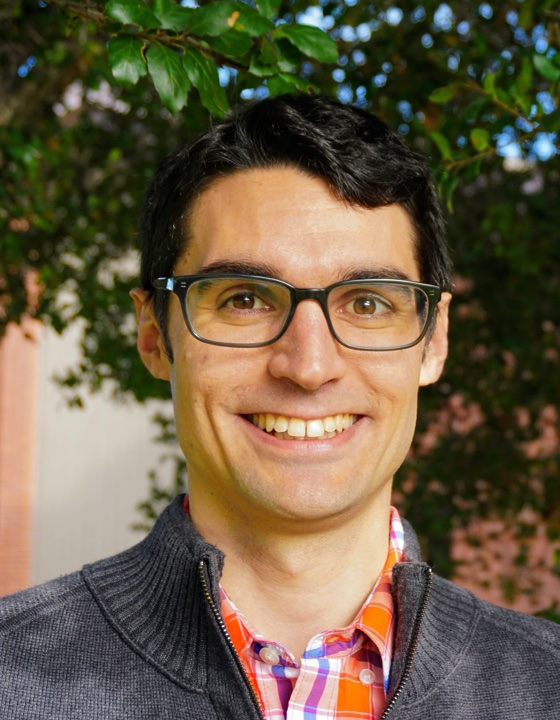}
is an assistant professor in the Robotics Institute at Carnegie Mellon University and founder of the Robotic Exploration Lab. He received a PhD in aerospace engineering in 2015 and a BS in applied physics in 2009, both from Cornell University. His research interests include control and optimization with applications to aerospace and robotic systems.
\end{biographywithpic} 
\begin{biographywithpic}
{Markus Ryll}{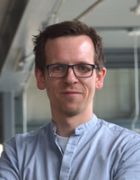}
is an assistant professor in the Department of Aerospace and Geodesy at the Technical University of Munich and head of the Autonomous Aerial Systems Lab. He received a PhD in 2014 from Max Planck Institute for Biological Cybernetics. Between 2014 and 2017, he was a postdoctoral researcher at the Laboratory for Analysis and Architecture of Systems (LAAS-CNRS). From 2018 to 2020, he continued his postdoctoral work in the Robust Robotics Group at the Massachusetts Institute of Technology (MIT, CSAIL). His research focuses on enabling autonomous aerial robots to interact with real-world environments.
\end{biographywithpic} 
\end{document}